\newtheorem{theorem}{Theorem}
\newtheorem{lemma}[theorem]{Lemma}
\newcommand{\RR}{\mathbb{R}}
\newcommand{\EE}{\mathbb{E}}
\begin{document}

\title{TT-FSI: Scalable Faithful Shapley Interactions via Tensor-Train}

\author{
  Ungsik Kim\\
  Gyeongsang National University\\
  \texttt{blpeng@gnu.ac.kr}
  \And
  Suwon Lee\thanks{(Corresponding author)}\\
  Gyeongsang National University\\
  leesuwon@gnu.ac.kr
}
\date{}

\maketitle

\begin{abstract}
The Faithful Shapley Interaction (FSI) index uniquely satisfies the faithfulness axiom among Shapley interaction indices, but computing FSI requires $O(d^\ell \cdot 2^d)$ time and existing implementations use $O(4^d)$ memory. We present \textbf{TT-FSI}, which exploits FSI's algebraic structure via Matrix Product Operators (MPO). Our main theoretical contribution is proving that the linear operator $v \mapsto \text{FSI}(v)$ admits an MPO representation with TT-rank $O(\ell d)$, enabling an efficient sweep algorithm with $O(\ell^2 d^3 \cdot 2^d)$ time and $O(\ell d^2)$ core storage---an exponential improvement over existing methods. Experiments on six datasets ($d=8$ to $d=20$) demonstrate up to 280$\times$ speedup over baseline, 85$\times$ over SHAP-IQ, and 290$\times$ memory reduction. TT-FSI scales to $d=20$ (1M coalitions) where all competing methods fail.
\end{abstract}

\section{Introduction}
\label{sec:intro}

Shapley values have become the de facto standard for feature attribution in explainable AI~\cite{arrieta2020explainable}, yet they are fundamentally limited to \emph{additive} explanations: each feature receives an independent contribution score, and these scores sum to the model output. This additive structure cannot capture \emph{interaction effects}---cases where features jointly contribute more (or less) than the sum of their individual effects. In tree ensembles and neural networks, such interactions often encode the most decision-relevant patterns, yet they remain invisible to standard SHAP analysis.

Shapley interaction indices address this gap by quantifying joint contributions of feature subsets~\cite{owen1972multilinear,grabisch1999axiomatic,fujimoto2006axiomatic}. Among proposed indices, the \textbf{Faithful Shapley Interaction (FSI)}~\cite{tsai2023faith} is uniquely characterized by the \emph{faithfulness axiom}: FSI scores induce the optimal $\ell$-order surrogate that minimizes Shapley-weighted approximation error. This optimality guarantee is absent in alternatives such as STII~\cite{sundararajan2020shapley} and SII~\cite{grabisch1999axiomatic}, making FSI the principled choice when explanation fidelity matters.

Despite its theoretical appeal, computing FSI is computationally challenging. The closed-form solution involves evaluating all $2^d$ coalitions and performing operations that scale as $O(d^\ell \cdot 2^d)$ for fixed interaction order $\ell$, or $O(3^d)$ when $\ell = d$. For moderate-dimensional data ($d \geq 15$), this becomes prohibitively expensive. Existing implementations such as SHAP-IQ~\cite{muschalik2024shapiq} in exact mode call \texttt{np.diag()} on a $2^d$-length vector, materializing a dense $2^d \times 2^d$ matrix and causing $O(4^d)$ memory consumption.

\paragraph{Our Approach.} Recent work IT-SHAP~\cite{hasegawa2025interaction} demonstrated that the Shapley-Taylor Interaction Index (STII) admits efficient computation via Tensor-Train (TT) decomposition, exploiting STII's discrete derivative structure. However, FSI---defined as the solution to a weighted least squares optimization problem (Eq.~\ref{eq:fsi_wls})---appears structurally incompatible with such methods. While STII admits local prefix-based transitions yielding TT-rank $O(d)$, FSI seems to require global regression over all coalitions.

Our key insight is that FSI's closed-form solution (Theorem 19 of~\cite{tsai2023faith}) can be decomposed into two operations---a M\"obius transform and a correction term---both of which admit efficient TT representations. The M\"obius transform has Kronecker product structure, giving it TT-rank 1. The correction term, despite its complex combinatorial weighting, depends only on subset \emph{cardinalities} $|S|$ and $|T|$, not on which specific features are included. This cardinality-dependence means we can compute it using an operator that tracks only two counters, yielding TT-rank $O(\ell d)$.

\paragraph{Contributions.} We present \textbf{TT-FSI}, a novel algorithm exploiting this algebraic structure. Our contributions build upon each other:
\begin{enumerate}
    \item \textbf{Theoretical foundation:} We prove that the linear operator $v \mapsto \text{FSI}(v)$ admits a Matrix Product Operator (MPO) representation with TT-rank $O(\ell d)$. This result---exploiting the TT structure of the \emph{operator itself}, not of the input $v$---enables the following algorithmic contribution.

    \item \textbf{Efficient algorithm:} Building on the MPO representation, we develop a left-to-right sweep algorithm achieving time complexity $O(\ell^2 d^3 \cdot 2^d)$ and core storage $O(\ell d^2)$---an exponential memory improvement over existing $O(4^d)$ methods.

    \item \textbf{Empirical validation:} We demonstrate that this theoretical and algorithmic framework translates to practical gains: up to 280$\times$ speedup over baseline, 85$\times$ over SHAP-IQ, and 290$\times$ memory reduction on six real-world datasets ($d=8$ to $d=20$). TT-FSI computes FSI for $d=20$ (over 1 million coalitions) in under 10 seconds, where all competing methods fail.
\end{enumerate}

\section{Background}
\label{sec:background}
This section introduces the key concepts needed to understand TT-FSI. We first review Shapley values (\S\ref{sec:background}.1), then discuss interaction indices with focus on FSI (\S\ref{sec:background}.2), and finally introduce the Tensor-Train decomposition that enables our efficient algorithm (\S\ref{sec:background}.3).

\subsection{Cooperative Game Theory and Shapley Values}

Let $[d] = \{1, 2, \ldots, d\}$ be a set of $d$ players (features). A \emph{cooperative game} is defined by a value function $v: 2^{[d]} \to \RR$ that assigns a worth to each coalition $S \subseteq [d]$, with $v(\emptyset) = 0$. In the context of machine learning, $v(S)$ typically represents the model's prediction when only features in $S$ are present (with features in $[d] \setminus S$ marginalized out).

The \textbf{Shapley value}~\cite{shapley1953value} assigns to each player $i$ a fair share of the total worth:
\begin{equation}
\phi_i(v) = \sum_{S \subseteq [d] \setminus \{i\}} \frac{|S|!(d-|S|-1)!}{d!} [v(S \cup \{i\}) - v(S)].
\end{equation}

The Shapley value satisfies several desirable axioms including efficiency ($\sum_i \phi_i = v([d]) - v(\emptyset)$), symmetry, linearity, and null player properties.

\subsection{Shapley Interaction Indices}

Various interaction indices extend Shapley values to feature subsets, including SII~\cite{grabisch1999axiomatic}, k-SII, STII~\cite{sundararajan2020shapley}, and BII (see Appendix~\ref{app:indices} for definitions). Among these, the \textbf{Faithful Shapley Interaction (FSI)}~\cite{tsai2023faith} is uniquely characterized by the \emph{faithfulness axiom}: the interaction scores induce the optimal $\ell$-order polynomial surrogate under Shapley-weighted regression. FSI is defined as:
\begin{equation}
\label{eq:fsi_wls}
\min_{\{I_S\}_{|S| \leq \ell}} \sum_{T \subseteq [d]} w_T \left( v(T) - \sum_{S \subseteq T, |S| \leq \ell} I_S \right)^2,
\end{equation}
with specific weights $w_T$ derived from Shapley's axioms. The resulting surrogate $g(T) = \sum_{S \subseteq T, |S| \leq \ell} I_S$ is the unique \emph{faithful} approximation: for $\ell$-additive games (where $v$ has M\"obius support on $|S| \leq \ell$), FSI recovers $v$ exactly.

\paragraph{Closed-Form Solution.} While the optimization problem~\eqref{eq:fsi_wls} appears to require solving a weighted least squares system over $2^d$ coalitions, the special structure of Shapley weights admits a remarkably compact solution. Tsai et al.~\cite{tsai2023faith} proved that FSI has a closed-form (their Theorem 19):
\begin{equation}
\label{eq:fsi_closed}
E^{\text{FSI}}_S(v, \ell) = a(v, S) + C_S(v, \ell),
\end{equation}
where $a(v, S) = \sum_{T \subseteq S} (-1)^{|S|-|T|} v(T)$ is the \textbf{M\"obius transform} and $C_S(v, \ell)$ is a \textbf{correction term}:
\begin{equation}
\label{eq:correction}
C_S(v, \ell) = (-1)^{\ell-|S|} \frac{|S|}{\ell+|S|} \binom{\ell}{|S|} \sum_{\substack{T \supset S \\ |T| > \ell}} \frac{\binom{|T|-1}{\ell}}{\binom{|T|+\ell-1}{\ell+|S|}} a(v, T).
\end{equation}

This correction term adjusts the M\"obius transform to ensure faithfulness, and it only involves coalitions $T$ with $|T| > \ell$.

\subsection{Tensor-Train Decomposition}

The \textbf{Tensor-Train (TT) decomposition}~\cite{oseledets2011tensor,kolda2009tensor} represents a $d$-dimensional tensor $\mathcal{A} \in \RR^{n_1 \times \cdots \times n_d}$ as a product of 3-way tensors (cores):
\begin{equation}
\mathcal{A}_{i_1, \ldots, i_d} = \sum_{\alpha_1, \ldots, \alpha_{d-1}} G^{(1)}_{i_1, \alpha_1} G^{(2)}_{\alpha_1, i_2, \alpha_2} \cdots G^{(d)}_{\alpha_{d-1}, i_d}.
\end{equation}

The \textbf{TT-rank} is the tuple $(r_1, \ldots, r_{d-1})$ where $r_k$ is the dimension of the bond index $\alpha_k$. A key property is that tensors with low TT-rank can be stored and manipulated efficiently in $O(dnr^2)$ space and time, where $r = \max_k r_k$ and $n = \max_k n_k$.

\paragraph{Matrix Product Operators (MPO).} For our purposes, we work with \emph{Matrix Product Operators}~\cite{schollwock2011density}, which represent linear maps $M: \RR^{2^d} \to \RR^{2^d}$ in TT format:
\begin{equation}
M_{\sigma, \tau} = \sum_{\alpha_1, \ldots, \alpha_{d-1}} G^{(1)}_{\sigma_1, \tau_1, \alpha_1} G^{(2)}_{\alpha_1, \sigma_2, \tau_2, \alpha_2} \cdots G^{(d)}_{\alpha_{d-1}, \sigma_d, \tau_d},
\end{equation}
where $\sigma = (\sigma_1, \ldots, \sigma_d)$ and $\tau = (\tau_1, \ldots, \tau_d)$ are binary indices representing subsets.

\paragraph{Why TT Matters for FSI.} A central advantage of TT decomposition is that \emph{operators with specific algebraic structures often admit low TT-rank representations}. If an operator $M$ can be written as a Kronecker product $M = \bigotimes_{i=1}^d M_i$ of small matrices, it has TT-rank 1 (Lemma~\ref{lem:kronecker_rank1} in Appendix~\ref{app:proofs}). More generally, operators that ``locally'' process each dimension with bounded state tracking can be represented with polynomial TT-rank.

The M\"obius operator $\mu_\downarrow$ on the Boolean lattice---central to FSI computation---has precisely such a Kronecker structure: $\mu_\downarrow = \bigotimes_{i=1}^{d} M_i$ where $M_i = \begin{psmallmatrix} 1 & 0 \\ -1 & 1 \end{psmallmatrix}$. This implies $\mu_\downarrow$ has \textbf{TT-rank 1} and can be applied in $O(d \cdot 2^d)$ time (see Appendix~\ref{app:incidence}).

Recent work IT-SHAP~\cite{hasegawa2025interaction} exploited this structure for the Shapley-Taylor Interaction Index (STII), which has a single-counter structure yielding TT-rank $O(d)$. However, FSI's correction term (Eq.~\ref{eq:correction}) requires tracking \emph{two} counters---$|S|$ and $|T|$---increasing TT-rank to $O(\ell d)$. Despite this added complexity, the cardinality-dependence still enables polynomial-rank representation, as we formalize in \S\ref{sec:method}.

\section{Related Work}
\label{sec:related}

Before presenting our algorithm, we situate TT-FSI within the context of landscape of explainability methods and tensor decomposition techniques.

\paragraph{Shapley-based Explanations.} LIME~\cite{ribeiro2016should} pioneered local surrogate-based explanations, while earlier work by \v{S}trumbelj and Kononenko~\cite{vstrumbelj2014explaining} developed Shapley-based feature contributions. SHAP~\cite{lundberg2017unified} unified several feature attribution methods under the Shapley value framework, becoming central to modern XAI~\cite{guidotti2018survey}. Covert et al.~\cite{covert2021explaining} provide a comprehensive taxonomy of removal-based explanations unifying 26 methods. Integrated Gradients~\cite{sundararajan2017axiomatic} provides an axiomatic approach for neural networks. TreeSHAP~\cite{lundberg2020local} enables polynomial-time exact computation for tree models by exploiting tree structure. FastSHAP~\cite{jethani2022fastshap} trains an auxiliary model to predict Shapley values in real-time, trading exactness for speed.

\paragraph{Interaction Indices.} Beyond Shapley-based methods, statistical approaches such as the H-statistic~\cite{friedman2008predictive} and functional ANOVA decomposition~\cite{hooker2007generalized} measure feature interactions through variance partitioning. In game-theoretic approaches, the Shapley Interaction Index (SII)~\cite{grabisch1999axiomatic} and Shapley-Taylor Interaction Index (STII)~\cite{sundararajan2020shapley} extend Shapley values to interactions but lack faithfulness. Faith-Shap (FSI)~\cite{tsai2023faith} is the unique faithful interaction index.

\paragraph{Efficient Computation.} Computing Shapley values is \#P-hard in general~\cite{van2022tractability,arenas2023complexity}, motivating both exact algorithms for tractable model classes and sampling-based approximations. The choice between interventional and observational conditionals~\cite{janzing2020feature} and between model-faithful and data-faithful explanations~\cite{chen2020true} further complicates evaluation, while adversarial vulnerabilities of post-hoc methods~\cite{slack2020fooling} underscore the need for principled approaches. The shapiq library~\cite{muschalik2024shapiq,muschalik2024shapiq} provides a unified framework for computing various interaction indices, including both exact and approximate methods. Its exact mode (SHAP-IQ) scales poorly with dimension, creating dense $2^d \times 2^d$ matrices via \texttt{np.diag()} and causing $O(4^d)$ memory consumption. Sampling-based alternatives~\cite{castro2009polynomial,mitchell2022sampling} such as SVARM-IQ~\cite{kolpaczki2024svarm} use stratified sampling to estimate interaction indices with probabilistic guarantees, achieving polynomial sample complexity for fixed approximation error. Extensions handle dependent features~\cite{aas2021explaining} and global importance aggregation~\cite{covert2020understanding}. These sampling approaches excel when approximate values suffice, but exact computation remains necessary for rigorous analysis, debugging model behavior, and benchmarking approximation methods---precisely the use cases TT-FSI addresses.

\paragraph{Tensor Networks.} Tensor-Train decomposition~\cite{oseledets2011tensor,cichocki2016tensor} and Matrix Product States~\cite{schollwock2011density} originated in numerical linear algebra and quantum physics. Connections between tensor networks and graphical models~\cite{robeva2019duality} provide additional theoretical grounding. Recent work has applied tensor networks to machine learning~\cite{stoudenmire2016supervised,novikov2015tensorizing}. TT-FSI makes no assumption on $v$; we exploit the TT structure of the \emph{FSI operator} rather than the input.

\paragraph{Relation to IT-SHAP.} IT-SHAP~\cite{hasegawa2025interaction} applied TT decomposition to the Shapley-Taylor Interaction Index (STII), achieving substantial speedups. STII's success stems from its combinatorial simplicity: discrete derivatives yield weights depending on a \emph{single} cardinality, representable by a single-counter finite state machine with TT-rank $O(d)$. FSI, by contrast, is defined implicitly via weighted least squares (Eq.~\ref{eq:fsi_wls})---an optimization problem rather than a direct formula---making it appear structurally incompatible with tensor methods. Indeed, IT-SHAP does not mention FSI, leaving the ``least squares class'' of interaction indices outside tensor-based approaches.

We show that this incompatibility is only superficial. By analyzing Tsai et al.'s closed-form solution~\cite{tsai2023faith}, we discover that FSI decomposes into incidence algebra operations: a rank-1 M\"obius transform plus a correction term whose weights depend on \emph{two} cardinalities ($|S|$ and $|T|$). This two-counter structure admits a finite state machine with $O(\ell d)$ states---still tractable. TT-FSI thus extends tensor methods beyond combinatorially-defined indices to the optimization-defined FSI, bridging a gap that prior work left unaddressed.

\section{Method: TT-FSI}
\label{sec:method}

\subsection{Overview}

FSI's closed-form (Eq.~\ref{eq:fsi_closed}) naturally decomposes into two operations: the \textbf{M\"obius transform} $a(v, S)$, which extracts the ``pure'' interaction contribution of subset $S$ by removing redundant contributions from smaller coalitions, and a \textbf{correction term} $C_S(v, \ell)$, which redistributes contributions from large coalitions ($|T| > \ell$) to ensure faithfulness.

Both operations are \emph{linear} in $v$. The M\"obius transform $a(v, S) = \sum_{T \subseteq S} (-1)^{|S|-|T|} v(T)$ is a linear combination of value function entries, computed by the operator $\mu_\downarrow$. The correction term (Eq.~\ref{eq:correction}) sums over $a(v, T)$ for $|T| > \ell$---itself linear in $v$---with coefficients depending only on cardinalities $|S|$ and $|T|$, computed by the operator $A_{\mathrm{trunc}}$. We thus view FSI as a linear operator mapping $v \in \RR^{2^d}$ to interaction scores:
\begin{equation}
\label{eq:fsi_decomp}
E^{\text{FSI}}(v, \ell) = \underbrace{\mu_\downarrow \cdot v}_{\text{M\"obius transform}} + \underbrace{A_{\text{trunc}} \cdot (\mu_\downarrow \cdot v)}_{\text{Correction term}},
\end{equation}
where $\mu_\downarrow$ is the M\"obius operator and $A_{\text{trunc}}$ is a correction MPO. Our key insight is that \emph{both operations admit efficient TT representations}: the M\"obius transform has TT-rank 1 due to its Kronecker structure (\S\ref{sec:mobius}), and the correction term has TT-rank $O(\ell d)$ because it only needs to track two bounded counters (\S\ref{sec:correction}).

\paragraph{Roadmap.} We proceed as follows: (a) establish the TT structure of each component---Möbius transform (\S\ref{sec:mobius}) and correction term (\S\ref{sec:correction}); (b) develop an efficient sweep algorithm to apply the combined MPO (\S\ref{sec:contraction}); (c) introduce a precontraction optimization (\S\ref{sec:precontract}); and (d) analyze the resulting complexity (\S\ref{sec:complexity}).

\subsection{M\"obius Transform as Rank-1 MPO}
\label{sec:mobius}

The M\"obius transform $a(v, S) = \sum_{T \subseteq S} (-1)^{|S|-|T|} v(T)$ can be computed by applying the operator $\mu_\downarrow$ to the value function vector $v$. Due to the Kronecker structure, each core is:
\begin{equation}
G^{(k)}_{\text{M\"ob}}[\sigma_k, \tau_k] = M[\sigma_k, \tau_k] = \begin{cases} 1 & \sigma_k = 0, \tau_k = 0 \\ 1 & \sigma_k = 1, \tau_k = 1 \\ -1 & \sigma_k = 1, \tau_k = 0 \\ 0 & \text{otherwise} \end{cases}
\end{equation}

Since all cores have bond dimension 1, the M\"obius transform has \textbf{TT-rank 1} and can be computed in $O(d \cdot 2^d)$ time.

\subsection{Correction Term as Polynomial-Rank MPO}
\label{sec:correction}

The correction term in Equation~\eqref{eq:correction} involves summing over supersets $T \supset S$ with $|T| > \ell$, weighting by coefficients that depend on $|S|$ and $|T|$, and filtering to outputs with $|S| \leq \ell$.

\paragraph{Intuition: Why Polynomial Rank?} The correction term can be computed by an operator that processes each feature dimension sequentially, tracking only \emph{two counters}: the running size of the output set $|S|$ and the running size of the input set $|T|$. Since the weights depend only on these cardinalities (not on which specific features are included), we can model this computation as a \emph{finite-state machine} with state space $(|S|, |T|)$. At each position $k$, the number of possible states is bounded by $O(\ell k)$ because $|S| \leq \ell$. This finite-state structure naturally maps to an MPO with polynomial TT-rank---far smaller than the exponential $2^d$ that would be required by a naive approach. Figure~\ref{fig:fsm} illustrates this structure for $d=3$ and $\ell=2$.

We now formalize this intuition. The key insight is that the abstract state $(|S|, |T|)$ can be tracked incrementally: at each position $k$, we maintain \emph{running counts} $(s_k, t_k)$ representing the partial sums accumulated so far, which equal $(|S|, |T|)$ at the final position $k = d$.

\paragraph{Formal Construction.} We construct an MPO $A_{\text{trunc}}$ that realizes the finite-state machine described above. The MPO's bond indices encode the running state $(s_k, t_k)$, where $s_k = \sum_{i=1}^k \sigma_i$ is the running count of output bits (size of $S$) and $t_k = \sum_{i=1}^k \tau_i$ is the running count of input bits (size of $T$).

\paragraph{State Space.} At position $k$, the state space is:
\begin{equation}
\{(s, t) : 0 \leq s \leq \min(k, \ell), \quad 0 \leq t \leq k, \quad t \geq s\}
\end{equation}

The constraint $s \leq \min(k, \ell)$ implements the truncation to interaction order $\ell$. The constraint $t \geq s$ is a \emph{derived necessary condition}: we enforce the per-bit constraint $\tau_i \geq \sigma_i$ in local transitions (see below), which implies $t_k \geq s_k$ automatically. The state constraint helps bound the number of reachable states.

\paragraph{Bond Dimension.} Note that while the constraint $t \geq s$ reduces the exact count of valid states, the number of states is \emph{upper bounded by}:
\begin{equation}
D_k \leq (\min(k, \ell) + 1)(k + 1) = O(\ell k),
\end{equation}
which suffices for our TT-rank analysis. The maximum bond dimension is thus $D_{\max} = O(\ell d)$.

\begin{figure}[t]
    \centering
    \includegraphics[width=0.85\columnwidth,height=0.38\textheight,keepaspectratio=false]{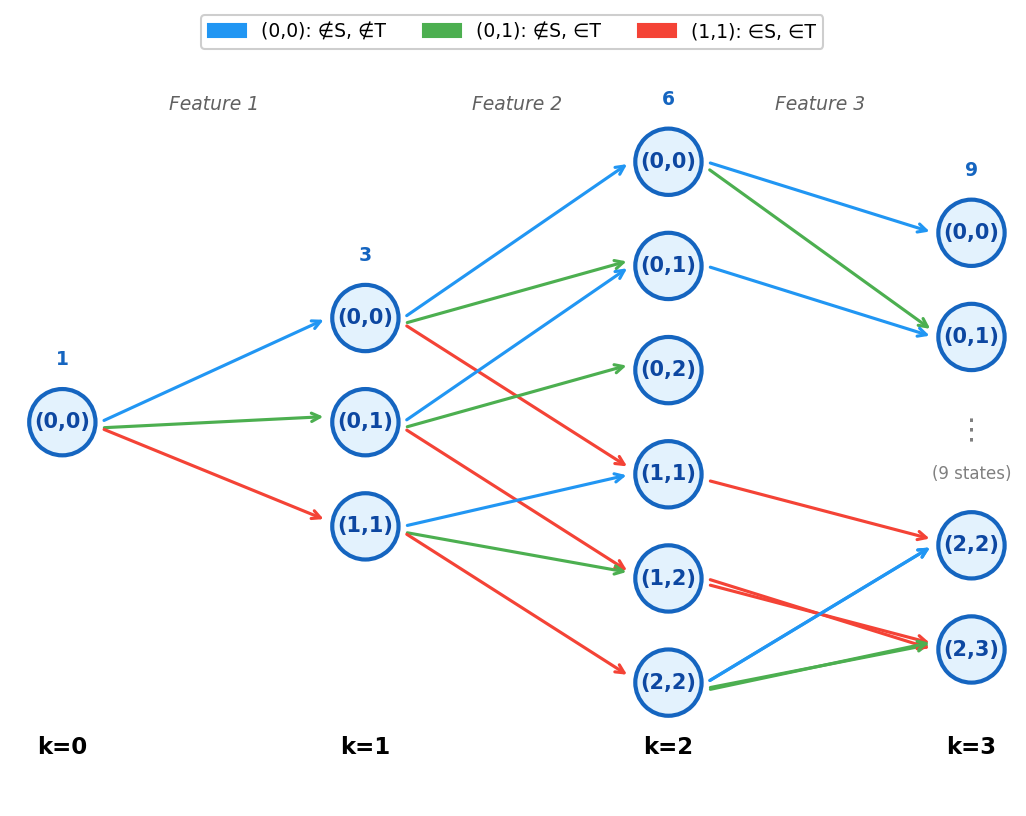}
    \caption{Finite-state machine for correction operator ($d=3$, $\ell=2$). States $(s,t)$ track cumulative counts. Colors: blue ($\notin S, \notin T$), green ($\notin S, \in T$), red ($\in S \cap T$). State count grows polynomially: $1 \to 3 \to 6 \to 9$.}
    \label{fig:fsm}
\end{figure}

\begin{lemma}[TT-rank upper bound]
\label{lem:ttrank_state}
The TT-rank of $A_{\mathrm{trunc}}$ at bond $k$ is upper bounded by the number of reachable states, i.e., $O(\ell k)$. The maximum TT-rank is thus $O(\ell d)$. (Proof in Appendix~\ref{app:proofs}.)
\end{lemma}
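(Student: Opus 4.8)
The plan is to make precise the finite-state-machine picture and then invoke the standard fact that an MPO whose bond index at cut $k$ ranges over a set of size $r_k$ has TT-rank at most $r_k$ at that bond. Concretely, I would first write the correction operator $A_{\mathrm{trunc}}$ entrywise: $(A_{\mathrm{trunc}})_{S,T}$ is nonzero only when $S \subseteq T$, $|S| \le \ell$, $|T| > \ell$, in which case it equals the coefficient $c(|S|,|T|)$ read off from Eq.~\eqref{eq:correction} (the prefactor $(-1)^{\ell-|S|}\tfrac{|S|}{\ell+|S|}\binom{\ell}{|S|}$ times the ratio of binomials). The key structural observation is that this coefficient, and the support constraint $S\subseteq T$, both factor through the \emph{cumulative bit counts}: if we write $\sigma,\tau\in\{0,1\}^d$ for the indicator vectors of $S,T$, then $S\subseteq T \iff \sigma_i \le \tau_i$ for all $i$, and $c(|S|,|T|) = c\big(\sum_i\sigma_i,\sum_i\tau_i\big)$.

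Next I would exhibit the explicit MPO decomposition. Define, for each position $k$, the bond index set $\mathcal{B}_k = \{(s,t) : 0\le s\le\min(k,\ell),\ 0\le t\le k,\ t\ge s\}$, with $\mathcal{B}_0 = \{(0,0)\}$ and the left/right boundary bonds trivial. The core $G^{(k)}$ has entries $G^{(k)}_{(s,t),\sigma_k,\tau_k,(s',t')}$ equal to $1$ exactly when $\sigma_k\le\tau_k$, $s' = s+\sigma_k$, $t' = t+\tau_k$, and $(s',t')\in\mathcal{B}_k$ (with the final core additionally absorbing the scalar coefficient $c(s',t')$ and enforcing $t'>\ell$, $s'\le\ell$), and $0$ otherwise. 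Telescoping the bond sums, the only surviving path from the left boundary to the right boundary for a given $(\sigma,\tau)$ is the one whose intermediate states are the genuine prefix counts $(s_k,t_k) = (\sum_{i\le k}\sigma_i, \sum_{i\le k}\tau_i)$; this path contributes $1$ precisely when $\sigma_i\le\tau_i$ for all $i$ and the cardinality/order constraints hold, and it is weighted by $c(|S|,|T|)$ from the last core. Hence the contracted MPO equals $A_{\mathrm{trunc}}$ entrywise. Since the bond index at cut $k$ lives in $\mathcal{B}_k$, the TT-rank at that bond is at most $|\mathcal{B}_k| \le (\min(k,\ell)+1)(k+1) = O(\ell k)$, and taking the max over $k$ gives $O(\ell d)$, which is exactly the claim.

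The main obstacle — really the only place where care is needed — is the telescoping/correctness argument: one must verify that the factored-through-prefix-counts construction reproduces \emph{exactly} the coefficient in Eq.~\eqref{eq:correction}, including the sign $(-1)^{\ell-|S|}$, the boundary conditions (the strict inequality $|T|>\ell$ and $T\supset S$ versus $T\supseteq S$), and the edge case $|S| = 0$ where $C_S = 0$. A secondary subtlety is bookkeeping the two boundary bonds so that the claimed ranks $r_1,\dots,r_{d-1}$ match the intermediate bonds and the overall map is genuinely $\RR^{2^d}\to\RR^{2^d}$; I would handle this by noting $r_0 = r_d = 1$ and $r_k = |\mathcal{B}_k|$ for $1\le k\le d-1$. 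Everything else is mechanical: the per-bit constraint $\sigma_k\le\tau_k$ is a local rule, the counter updates are additive, and the scalar coefficient can be attached to any single core (I put it on the last) without affecting the bond dimensions. I would also remark that the stated bound is not claimed to be tight — the constraint $t\ge s$ and reachability from the all-zeros start prune the state set further, but since we only need an $O(\ell d)$ upper bound, the crude product bound $(\min(k,\ell)+1)(k+1)$ suffices and keeps the proof short.
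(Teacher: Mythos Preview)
Your proposal is correct and follows essentially the same approach as the paper: both argue that the bond index at cut $k$ encodes the state $(s_k,t_k)$ of a finite-state machine, and bound the TT-rank by the state count $|\mathcal{B}_k| \le (\min(k,\ell)+1)(k+1) = O(\ell k)$. Your write-up is more thorough than the paper's terse appendix proof---you explicitly verify the telescoping correctness (that the contracted MPO reproduces $A_{\mathrm{trunc}}$ entrywise) and flag the boundary edge cases ($T\supset S$ vs.\ $T\supseteq S$, the $|S|=0$ case), whereas the paper delegates those to its Section~4.3 construction and Lemma~\ref{lem:weight_correct}; but the core idea is identical.
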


\paragraph{Core Construction.} For internal sites $k = 1, \ldots, d-1$, the core implements pure state transitions:
\begin{equation}
G^{(k)}[\alpha_{\text{prev}}, \sigma, \tau, \alpha_{\text{next}}] = \begin{cases}
1 & \text{if valid transition} \\
0 & \text{otherwise}
\end{cases}
\end{equation}

A transition is valid if: (i)~$\tau \geq \sigma$ (per-bit superset enforcement: if $\sigma_k = 1$ then $\tau_k = 1$); (ii)~$s_{\text{next}} = s_{\text{prev}} + \sigma$ and $t_{\text{next}} = t_{\text{prev}} + \tau$ (counter updates); and (iii)~$s_{\text{next}} \leq \min(k, \ell)$ (output truncation).

For the final site $k = d$, the core applies the weight function:
\begin{equation}
G^{(d)}[\alpha_{\text{prev}}, \sigma, \tau, 0] = \begin{cases}
w(s, t) & \text{if valid and } s \leq \ell, t > \ell \\
0 & \text{otherwise}
\end{cases}
\end{equation}

where the weight $w(s, t)$ extracts the coefficient from the correction formula (Eq.~\ref{eq:correction}), substituting $s = |S|$ and $t = |T|$ (correctness proof in Lemma~\ref{lem:weight_correct}):
\begin{equation}
\label{eq:weight}
w(s, t) = (-1)^{\ell - s} \cdot \frac{s}{\ell + s} \cdot \binom{\ell}{s} \cdot \frac{\binom{t-1}{\ell}}{\binom{t+\ell-1}{\ell+s}}.
\end{equation}

\paragraph{Numerical Stability.} To avoid overflow in binomial coefficients for large $d$, we compute weights using log-gamma functions. The complete log-stable implementation is provided in Algorithm~2 of the Supplementary Material.

The preceding construction yields an MPO with polynomial TT-rank $O(\ell d)$, but naive application to a value function vector would still require exponential time. The next section addresses this computational challenge.

\subsection{Efficient MPO Contraction}
\label{sec:contraction}

The polynomial TT-rank $O(\ell d)$ established in \S\ref{sec:correction} is the key enabler of efficient computation. A naive implementation of MPO-vector multiplication iterates over all $2^d \times 2^d$ input-output pairs, resulting in $O(4^d \cdot D)$ complexity---worse than brute-force FSI. However, the TT structure allows us to avoid this exponential blowup: we employ a \textbf{left-to-right sweep} that processes one dimension at a time, maintaining an intermediate tensor that contracts the MPO core with the input progressively. Algorithm~\ref{alg:mpo_sweep} formalizes this approach.

\begin{algorithm}[t]
\caption{Efficient MPO-Vector Contraction (Left-to-Right Sweep)}
\label{alg:mpo_sweep}
\begin{algorithmic}[1]
\REQUIRE MPO cores $\{G^{(k)}\}_{k=1}^d$, input vector $v \in \RR^{2^d}$
\ENSURE Output vector $u = M \cdot v$
\STATE $X \gets v$ reshaped to $(1, 1, 2^d)$ \COMMENT{(out-prefix, bond, in-suffix)}
\FOR{$k = 1$ to $d$}
    \STATE \COMMENT{$X \in \RR^{2^{k-1} \times D_{k-1} \times 2^{d-k+1}}$}
    \STATE Reshape $X$ to $(2^{k-1} \cdot D_{k-1}, 2, 2^{d-k})$ \COMMENT{merge, split off $\tau_k$}
    \STATE Contract: $X'[m, \sigma, \beta, r] \gets \sum_{\tau, \alpha} X[m', \tau, r] \cdot G^{(k)}[\alpha, \sigma, \tau, \beta]$
    \STATE Reshape $X'$ to $(2^k, D_k, 2^{d-k})$ \COMMENT{expand out-prefix with $\sigma_k$}
    \STATE $X \gets X'$
\ENDFOR
\STATE Reshape $X$ to $u \in \RR^{2^d}$
\RETURN $u$
\end{algorithmic}
\end{algorithm}

\paragraph{Intermediate Tensor Structure.} At each step $k$, the intermediate tensor has shape $(2^{k-1}, D_{k-1}, 2^{d-k+1})$, where the three dimensions represent partial output configurations $(\sigma_1, \ldots, \sigma_{k-1})$, MPO bond states encoding $(s_{k-1}, t_{k-1})$ counters, and remaining input configurations $(\tau_k, \ldots, \tau_d)$, respectively.

\noindent\textbf{Remark.} While the output dimension appears as $2^d$, the correction MPO produces non-zero outputs \emph{only} for $|S| \leq \ell$ due to the truncation constraint in the state space. The $O(2^d)$ intermediate size is unavoidable because we must read the full input $v \in \RR^{2^d}$. Optionally, one can compress the output-prefix dimension by storing only prefixes with $s_k \leq \ell$, reducing $2^k$ to $\sum_{s \leq \ell} \binom{k}{s}$; we use the dense representation for simplicity since input I/O dominates memory.

\noindent\textbf{Sparse Core Storage.} The MPO cores are highly sparse. For each input state $\alpha_{\text{prev}}$, there are at most 3 valid transitions corresponding to $(\sigma, \tau) \in \{(0,0), (0,1), (1,1)\}$ (since $\tau \geq \sigma$). Thus each core $G^{(k)}$ has $O(D_{k-1})$ nonzero entries, and total storage is $O(\ell d^2)$ (Lemma~\ref{lem:sparse_storage}), which is even tighter than the dense bound $O(\ell^2 d^3)$.

Each contraction step costs $O(2^k \cdot D_{k-1} \cdot D_k \cdot 2^{d-k})$. Summing over all $k$ yields $O(\ell^2 d^3 \cdot 2^d)$ (detailed derivation in Lemma~\ref{lem:time_complexity}).
This bound is conservative. Since transitions are \emph{deterministic} (each $(\alpha_{\text{prev}}, \sigma, \tau)$ maps to at most one $\alpha_{\text{next}}$), exploiting sparsity can reduce the per-step cost to depend on $\text{nnz}(G^{(k)}) = O(D_{k-1})$ rather than $D_{k-1} \cdot D_k$, explaining why empirical runtimes often beat the theoretical upper bound.

\subsection{Precontraction Optimization}
\label{sec:precontract}

Since the correction term applies $A_{\text{trunc}}$ to the M\"obius-transformed input $a = \mu_\downarrow v$, we can precompute the composition:
\begin{equation}
B_{\text{corr}} := A_{\text{trunc}} \circ \mu_\downarrow.
\end{equation}

This allows computing the correction $C = B_{\text{corr}} \cdot v$ in a single pass. The final FSI is then $\text{FSI} = \mu_\downarrow v + B_{\text{corr}} v$. Core-level composition sums over the intermediate index $\tau$:
\begin{equation}
B_{\text{corr}}^{(k)}[\alpha, \sigma, \rho, \beta] = \sum_{\tau} A^{(k)}[\alpha, \sigma, \tau, \beta] \cdot M[\tau, \rho],
\end{equation}
where $M[\tau, \rho] = \delta_{\tau,\rho} - \delta_{\tau,1}\delta_{\rho,0}$ is the M\"obius matrix. Since $M$ is $2 \times 2$, this simplifies to two cases: $B^{(k)}[\alpha, \sigma, 0, \beta] = A^{(k)}[\alpha, \sigma, 0, \beta] - A^{(k)}[\alpha, \sigma, 1, \beta]$ and $B^{(k)}[\alpha, \sigma, 1, \beta] = A^{(k)}[\alpha, \sigma, 1, \beta]$.

Since $\mu_\downarrow$ has TT-rank 1, precontraction does not increase the bond dimension of $B_{\text{corr}}$ beyond that of $A_{\text{trunc}}$ (Lemma~\ref{lem:precontract_rank}).

\subsection{Complexity Analysis}
\label{sec:complexity}

\begin{table}[t]
\caption{Complexity comparison of FSI computation methods. ``Core Space'' denotes storage for algorithm-specific data structures, excluding the $O(2^d)$ input/output vectors common to all methods.}
\label{tab:complexity}
\centering
\small
\begin{tabular}{lccc}
\toprule
\textbf{Method} & \textbf{Time} & \textbf{Core Space} & \textbf{Space Complexity} \\
\midrule
Baseline~\cite{tsai2023faith} & $O(d^\ell \cdot 2^d)$ & $O(1)$ & $O(2^d)$ \\
SHAP-IQ~\cite{muschalik2024shapiq} & $O(d^{2\ell} \cdot 2^d)$ & $O(4^d)$ & $O(4^d)$ \\
TT-FSI (ours) & $O(\ell^2 d^3 \cdot 2^d)$ & $O(\ell d^2)$ & $O(\ell d \cdot 2^d)$ \\
\bottomrule
\end{tabular}
\end{table}

Table~\ref{tab:complexity} summarizes the complexity of different approaches.

\paragraph{Time Complexity.} The baseline FSI computation iterates over all $(S, T)$ pairs where $T \supseteq S$ and $|S| \leq \ell$. The number of such pairs is $\sum_{t=0}^{\ell} \binom{d}{t} \cdot 2^{d-t}$, which by the binomial theorem equals $3^d$ when $\ell = d$, but simplifies to $O(d^\ell \cdot 2^d)$ for fixed $\ell \ll d$. SHAP-IQ's \texttt{compute\_fii()} follows a similar iteration pattern, plus an $O(d^{2\ell} \cdot 2^d)$ term from the least-squares solver operating on a $(2^d \times O(d^\ell))$ matrix.

TT-FSI's time complexity $O(\ell^2 d^3 \cdot 2^d)$ has a \emph{fixed polynomial} dependence on $d$, independent of $\ell$'s relationship to $d$. This yields speedup ratio $(d^\ell \cdot 2^d) / (\ell^2 d^3 \cdot 2^d) = d^{\ell-3}/\ell^2$, which grows polynomially for $\ell \geq 4$ and is $O(1)$ for $\ell \leq 3$. Our experiments at $\ell \in \{2, 3\}$ show speedups from constant-factor improvements in the TT structure.

\paragraph{Memory Complexity.} More critically, SHAP-IQ's \texttt{compute\_fii()} creates a dense diagonal weight matrix via \texttt{np.diag()}, allocating a full $2^d \times 2^d$ array---hence $O(4^d)$ memory. This is the root cause of SHAP-IQ's out-of-memory failures at $d \geq 16$. TT-FSI's core space complexity is \textbf{polynomial} $O(\ell d^2)$, with peak memory $O(\ell d \cdot 2^d)$ dominated by the intermediate tensor during contraction. All methods require $O(2^d)$ space for input $v$ and output FSI scores.

\paragraph{NC$^2$ Complexity.} TT-FSI belongs to NC$^2$~\cite{arora2009computational}: the computation can be performed by circuits of depth $O(\log n \cdot \log\log n)$ where $n=2^d$. This suggests potential for further speedup via GPU parallelization in future work (proof in Appendix~\ref{app:proofs}).

\section{Experiments}
\label{sec:experiments}

We evaluate TT-FSI on six datasets: California Housing~\cite{pace1997sparse} ($d=8$), Diabetes~\cite{efron2004least} ($d=10$), COMPAS~\cite{angwin2022machine} ($d=11$), Adult~\cite{asuncion2007uci} ($d=14$), Bank Marketing~\cite{asuncion2007uci} ($d=16$), and German Credit~\cite{asuncion2007uci} ($d=20$), training four model types (DT, XGBoost, LGBM, MLP). We compare against baseline FSI and SHAP-IQ~\cite{muschalik2024shapiq} v0.1.1 exact mode; full details in Appendix~\ref{app:experiments}.

\paragraph{Accuracy.} TT-FSI produces numerically identical results: max difference vs.\ baseline is $8.0 \times 10^{-15}$ (machine precision), vs.\ SHAP-IQ is $1.8 \times 10^{-9}$.

\subsection{Runtime Performance}

\begin{table}[t]
\caption{Runtime summary at $\ell=3$ (mean over 4 models $\times$ 3 seeds $\times$ 3 runs). Full breakdown in Table~\ref{tab:runtime_full}.}
\label{tab:runtime}
\centering
\begin{tabular}{lc|rrr|rr}
\toprule
Dataset & $d$ & TT-FSI & SHAP-IQ & Baseline & \multicolumn{2}{c}{Speedup} \\
& & & & & vs SHAP-IQ & vs Baseline \\
\midrule
California & 8 & \textbf{0.8ms} & 2.5ms & 21.7ms & 3$\times$ & 27$\times$ \\
Diabetes & 10 & \textbf{1.5ms} & 18.5ms & 162.7ms & 12$\times$ & 105$\times$ \\
COMPAS & 11 & \textbf{2.6ms} & 89ms & 426ms & 34$\times$ & 164$\times$ \\
Adult & 14 & \textbf{25ms} & 1997ms & 6909ms & 80$\times$ & 276$\times$ \\
Bank & 16 & \textbf{156ms} & OOM$^*$ & $>$1h$^\dagger$ & -- & -- \\
German & 20 & \textbf{7.95s} & OOM$^*$ & $>$1h$^\dagger$ & -- & -- \\
\bottomrule
\multicolumn{7}{l}{\scriptsize $^*$Out of memory. $^\dagger$Exceeds time limit ($O(4^d)$ iterations).}
\end{tabular}
\end{table}

Table~\ref{tab:runtime} summarizes runtime performance (full per-model breakdown in Appendix~\ref{app:experiments}).

\paragraph{Speedup Analysis.} TT-FSI outperforms baseline FSI by 20--280$\times$ across all datasets, with speedup increasing with $d$: from $\sim$27$\times$ at $d=8$ to $\sim$276$\times$ at $d=14$. This aligns with our complexity analysis---baseline's $O(d^\ell \cdot 2^d)$ scaling incurs a polynomial factor that TT-FSI avoids. Beyond $d=14$, baseline becomes impractical due to $O(4^d)$ nested iterations, while SHAP-IQ exhausts memory ($O(4^d)$ space). TT-FSI scales gracefully with $\ell$, and its efficiency is model-agnostic---runtime depends only on $d$ and $\ell$.

\paragraph{Comparison with Approximations.} TT-FSI computes \emph{exact} FSI faster than sampling-based approximators. At $d=14$, $\ell=3$, TT-FSI (41ms) is 8$\times$ faster than SHAP-IQ Monte Carlo (337ms) and 377$\times$ faster than SVARM-IQ~\cite{kolpaczki2024svarm} (15.5s), while approximators incur RMSE of 0.15--0.40 (Table~\ref{tab:svarmiq} in Appendix).

\paragraph{Memory and Scaling.} TT-FSI achieves up to \textbf{195$\times$ memory reduction} versus SHAP-IQ at $d=14$ (21MB vs 4.1GB), enabling computation at $d=20$ where SHAP-IQ requires $>$60GB. Detailed memory analysis in Appendix~\ref{app:memory}.

\section{Conclusion}
\label{sec:conclusion}
We presented TT-FSI, a memory-efficient algorithm for computing Faithful Shapley Interactions via Tensor-Train decomposition. By exploiting the algebraic structure of FSI's closed form, we achieve TT-rank $O(\ell d)$ with time complexity $O(\ell^2 d^3 \cdot 2^d)$ and sparse core storage $O(\ell d^2)$. In practice, this translates to up to 280$\times$ speedup over baseline, 85$\times$ over SHAP-IQ, and 290$\times$ memory reduction, enabling successful computation at $d=20$ where all existing methods fail due to memory exhaustion.

\paragraph{Limitations and Future Work.} TT-FSI requires the full $2^d$ value function, limiting applicability to $d \lesssim 25$---sufficient for tabular ML but excluding high-dimensional domains. Future work could combine TT-FSI with sampling-based value estimation, or integrate model-specific optimizations (e.g., TreeSHAP-style algorithms). Our NC$^2$ result (Appendix~\ref{app:proofs}) establishes high parallelizability; GPU experiments (Appendix~\ref{app:gpu}) confirm up to 19$\times$ speedup via Tensor Cores.

\bibliographystyle{plain}
\bibliography{references}

\newpage
\appendix

\section{Shapley Interaction Indices}
\label{app:indices}

\paragraph{Shapley Interaction Index (SII).} The SII~\cite{grabisch1999axiomatic} extends Shapley values via discrete derivatives:
\begin{equation}
\text{SII}_S(v) = \sum_{T \subseteq [d] \setminus S} \frac{|T|!(d-|S|-|T|)!}{(d-|S|+1)!} \Delta_S v(T),
\end{equation}
where $\Delta_S v(T) = \sum_{R \subseteq S} (-1)^{|S|-|R|} v(T \cup R)$. SII satisfies linearity and symmetry but \emph{not} efficiency for $|S| > 1$.

\paragraph{k-SII.} The k-Shapley Interaction Index~\cite{sundararajan2020shapley} aggregates higher-order SII terms to restore efficiency: $\sum_{|S| \leq \ell} I_S = v([d]) - v(\emptyset)$.

\paragraph{Shapley-Taylor Interaction Index (STII).} STII~\cite{sundararajan2020shapley} distributes higher-order M\"obius coefficients to top-order interactions:
\begin{equation}
\text{STII}_S(v, \ell) = \sum_{T \supseteq S} \frac{(d - |T|)! (|T| - \ell)!}{(d - \ell + 1)!} \Delta_S v(T \setminus S).
\end{equation}
STII satisfies efficiency but \emph{not} faithfulness.

\paragraph{Banzhaf Interaction Index (BII).} BII uses uniform weighting instead of Shapley weights, offering computational simplicity.

\section{Incidence Algebra Details}
\label{app:incidence}

The M\"obius function on Boolean lattices connects to pseudo-Boolean function approximation~\cite{hammer1992approximations}, where Shapley values and Banzhaf indices arise naturally as best linear approximations. The M\"obius function $\mu$ and zeta function $\zeta$ on the Boolean lattice~\cite{bjorklund2007fourier} are:
\begin{equation}
\mu(S, T) = \begin{cases} (-1)^{|S|-|T|} & T \subseteq S \\ 0 & \text{otherwise} \end{cases}, \quad
\zeta(S, T) = \begin{cases} 1 & T \subseteq S \\ 0 & \text{otherwise} \end{cases}
\end{equation}

These have Kronecker product structure: $\mu_\downarrow = \bigotimes_{i=1}^{d} M_i$ where $M_i = \begin{psmallmatrix} 1 & 0 \\ -1 & 1 \end{psmallmatrix}$, implying TT-rank 1.

\section{Proofs}
\label{app:proofs}

\begin{lemma}[Kronecker product has TT-rank 1~\cite{oseledets2011tensor}]
\label{lem:kronecker_rank1}
If a linear operator $M: \RR^{n^d} \to \RR^{n^d}$ has Kronecker product structure $M = \bigotimes_{i=1}^{d} M_i$ where each $M_i \in \RR^{n \times n}$, then $M$ has TT-rank 1.
\end{lemma}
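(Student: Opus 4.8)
The plan is to exhibit an explicit MPO representation of $M$ in which every bond dimension equals $1$, which by the definition of TT-rank immediately gives $\ttrank(M) = 1$. The starting point is the defining property of the Kronecker product: writing $\sigma = (\sigma_1,\ldots,\sigma_d)$ and $\tau = (\tau_1,\ldots,\tau_d)$ for the multi-indices with each $\sigma_i, \tau_i \in \{1,\ldots,n\}$, we have $M_{\sigma,\tau} = \prod_{i=1}^{d} (M_i)_{\sigma_i,\tau_i}$. This is exactly a product of one scalar factor per site with no coupling between sites --- precisely the form of an MPO contraction whose internal sums are over singleton bond indices.

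Concretely, I would let each bond index $\alpha_1,\ldots,\alpha_{d-1}$ range over a one-element set and define the cores by $G^{(1)}_{\sigma_1,\tau_1,1} := (M_1)_{\sigma_1,\tau_1}$, by $G^{(k)}_{1,\sigma_k,\tau_k,1} := (M_k)_{\sigma_k,\tau_k}$ for $1 < k < d$, and by $G^{(d)}_{1,\sigma_d,\tau_d} := (M_d)_{\sigma_d,\tau_d}$. Substituting into the MPO formula, every sum $\sum_{\alpha_k}$ collapses to its single term and the product of cores telescopes to $\prod_{i=1}^{d} (M_i)_{\sigma_i,\tau_i} = M_{\sigma,\tau}$. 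Hence $M$ admits an MPO representation with bond dimensions $(1,\ldots,1)$, i.e.\ TT-rank $1$. For minimality one simply notes that TT-rank $0$ would force $M = 0$, so the representation is exact at rank $1$ whenever at least one $M_i$ is nonzero.

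The only point requiring care --- rather than a genuine obstacle --- is the index bookkeeping: aligning the multi-index ordering implicit in the Kronecker product with the site ordering of the MPO definition, and checking that the boundary cores at $k=1$ and $k=d$ carry one fewer bond index than the internal cores (which is why they are defined separately above). Neither introduces any mathematical content; the lemma is in essence a transcription of the definition of the Kronecker product into tensor-network notation, and the proof is a short verification that the two notations agree.
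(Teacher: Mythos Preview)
Your proposal is correct and follows essentially the same approach as the paper: both exhibit cores $G^{(k)} = M_k$ with trivial (dimension-$1$) bond indices and verify that the MPO contraction reduces to $\prod_{k=1}^{d}(M_k)_{\sigma_k,\tau_k} = M_{\sigma,\tau}$. Your treatment is slightly more explicit about the boundary cores and adds the minimality remark, but the argument is the same transcription of the Kronecker product into MPO form.
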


\begin{proof}
The TT decomposition of $M$ is given by cores $G^{(k)} = M_k$ with trivial bond indices (dimension 1). Explicitly, for input index $\tau = (\tau_1, \ldots, \tau_d)$ and output index $\sigma = (\sigma_1, \ldots, \sigma_d)$:
\begin{equation}
M_{\sigma, \tau} = \prod_{k=1}^{d} (M_k)_{\sigma_k, \tau_k} = G^{(1)}_{\sigma_1, \tau_1} \cdot G^{(2)}_{\sigma_2, \tau_2} \cdots G^{(d)}_{\sigma_d, \tau_d}.
\end{equation}
Since all bond dimensions equal 1, the TT-rank is 1. The M\"obius operator $\mu_\downarrow = \bigotimes_{i=1}^{d} \begin{psmallmatrix} 1 & 0 \\ -1 & 1 \end{psmallmatrix}$ is a special case.
\end{proof}

\begin{lemma}[TT-rank upper bound, restated]
The TT-rank of $A_{\mathrm{trunc}}$ at bond $k$ is upper bounded by $O(\ell k)$.
\end{lemma}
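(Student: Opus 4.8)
The plan is to exploit the fact that an \emph{explicit} TT representation of an operator bounds its TT-rank from above: the TT-rank at bond $k$ is, by definition, the smallest possible size of the bond index $\alpha_k$ over all TT decompositions, so the bond dimension $D_k$ of the particular decomposition constructed in \S\ref{sec:correction} is automatically an upper bound. It therefore suffices to (i)~confirm that the cores $G^{(1)},\ldots,G^{(d)}$ of \S\ref{sec:correction} remain a valid TT representation of $A_{\mathrm{trunc}}$ when the bond index at position $k$ is restricted to \emph{reachable} states, and (ii)~count those reachable states.

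For step (i), I would argue that any bond configuration not reachable from the left boundary state $(s_0,t_0)=(0,0)$ by a sequence of valid transitions contributes zero to every entry $(A_{\mathrm{trunc}})_{\sigma,\tau}$: the contraction $\sum_{\alpha_1,\ldots,\alpha_{d-1}} G^{(1)}\cdots G^{(d)}$ only accumulates weight along a path of states $(s_0,t_0)\to(s_1,t_1)\to\cdots\to(s_d,t_d)$ obeying the transition rules (per-bit constraint $\tau_i\ge\sigma_i$, counter updates $s_i=s_{i-1}+\sigma_i$, $t_i=t_{i-1}+\tau_i$, and truncation $s_i\le\min(i,\ell)$), and any path through an unreachable state carries a zero factor. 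Hence deleting unreachable bond indices---equivalently, restricting each core to its reachable sub-block---leaves $A_{\mathrm{trunc}}$ unchanged, so the effective bond dimension at position $k$ equals the number of reachable states there. This step requires only the transition \emph{structure}, not the correctness of the weight function $w(s,t)$ (that is Lemma~\ref{lem:weight_correct}).

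For step (ii), the count is immediate: a reachable state at position $k$ is a pair $(s,t)$ with $0\le s\le\min(k,\ell)$ and $0\le t\le k$ (the derived constraint $t\ge s$ only shrinks this set), so the number of reachable states at bond $k$ is at most
\begin{equation}
(\min(k,\ell)+1)(k+1)\;\le\;(\ell+1)(k+1)\;=\;O(\ell k).
\end{equation}
Combining with (i) gives $\ttrank_k(A_{\mathrm{trunc}})\le D_k=O(\ell k)$, and maximizing over $k\in\{1,\ldots,d-1\}$ (the bound is largest near $k=d$) yields maximum TT-rank $O(\ell d)$.

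I expect the only genuine subtlety to be making precise the claim in (i) that pruning unreachable bond indices does not alter the operator, in particular checking that the boundary cores are handled consistently: the first core has a trivial left bond pinned to $(0,0)$, and the last core has a trivial right bond that additionally imposes the output filter $s\le\ell$, $t>\ell$. Everything else is a finite-automaton state count, and no matching lower bound is needed since the lemma asserts only an upper bound.
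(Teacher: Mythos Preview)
Your proof is correct and follows essentially the same approach as the paper: exhibit the explicit MPO whose bond indices encode reachable $(s,t)$ states, then count those states to get the $(\min(k,\ell)+1)(k+1)=O(\ell k)$ bound. Your treatment is somewhat more careful than the paper's---you explicitly justify why an explicit decomposition upper-bounds the TT-rank and why unreachable states can be pruned without changing the operator---whereas the paper compresses this into the observation that core slices are partial permutation matrices and ``bond indices bijectively encode states.''
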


\begin{proof}
Each core slice $G^{(k)}[\cdot, \sigma, \tau, \cdot]$ is a partial permutation matrix: for each input state $\alpha_{\text{prev}} = (s_{\text{prev}}, t_{\text{prev}})$, the output state is uniquely determined by $(s_{\text{prev}} + \sigma, t_{\text{prev}} + \tau)$ if valid, or the transition is zero. The number of reachable states at position $k$ is $|\{(s,t): 0 \leq s \leq \min(k,\ell), s \leq t \leq k\}| \leq (\min(k,\ell)+1)(k+1) = O(\ell k)$. Since bond indices bijectively encode states, TT-rank equals the number of reachable states.
\end{proof}

\begin{lemma}[Weight function correctness]
\label{lem:weight_correct}
The weight function $w(s, t)$ in Equation~\eqref{eq:weight} correctly extracts the coefficient from FSI's correction formula (Equation~\eqref{eq:correction}).
\end{lemma}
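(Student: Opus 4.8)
The plan is to show that applying $A_{\mathrm{trunc}}$ to the M\"obius transform $a = \mu_\downarrow v$ reproduces the correction term $C_S(v,\ell)$ in each coordinate $S$; since $w(s,t)$ in Equation~\eqref{eq:weight} is, by construction, the coefficient of $a(v,T)$ in Equation~\eqref{eq:correction} under the substitution $s=|S|$, $t=|T|$, the only real work is confirming that the finite-state machine encoded by the cores sums over \emph{exactly} the family $\{T : S \subseteq T,\ |S|\le\ell,\ |T|>\ell\}$ and attaches amplitude one to each such term. First I would trace the MPO contraction for a fixed pair of binary indices $\sigma$ (encoding $S$) and $\tau$ (encoding $T$). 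As in the proof of Lemma~\ref{lem:ttrank_state}, each core slice $G^{(k)}[\cdot,\sigma_k,\tau_k,\cdot]$ is a partial permutation matrix, so the bond state evolves deterministically as $(s_k,t_k)=\bigl(\sum_{i\le k}\sigma_i,\ \sum_{i\le k}\tau_i\bigr)$ and there is at most one nonzero path. Along it, every internal core ($k<d$) contributes the factor $1$ when the local rules (i)~$\tau_k\ge\sigma_k$ and (iii)~$s_k\le\min(k,\ell)$ hold and $0$ otherwise, while the terminal core contributes $w(s_d,t_d)$ when the path is valid and additionally $s_d\le\ell$, $t_d>\ell$, and $0$ otherwise.

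Next I would check that the conjunction of these local rules matches the intended global index set. Requiring $\tau_k\ge\sigma_k$ at every site is equivalent to $S\subseteq T$. Given $S\subseteq T$: if $|S|\le\ell$ then $s_k\le\min(|S|,k)\le\min(\ell,k)$ for all $k$, so rule (iii) is automatically satisfied; conversely, if $|S|>\ell$, then at the site where the $(\ell{+}1)$-st element of $S$ is inserted we get $s_k=\ell+1>\min(k,\ell)$ (as $k\ge\ell+1$), so the path is annihilated. The terminal constraints then impose $|S|\le\ell$ (redundantly) and $|T|>\ell$. Hence the $(S,T)$ entry of $A_{\mathrm{trunc}}$ equals $w(|S|,|T|)$ when $S\subseteq T$, $|S|\le\ell$ and $|T|>\ell$, and vanishes otherwise.

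It remains to match this against Equation~\eqref{eq:correction}. For $|S|\le\ell$ we get
\begin{equation*}
\bigl(A_{\mathrm{trunc}}\, a\bigr)_S \;=\; \sum_{\substack{T \supseteq S \\ |T| > \ell}} w(|S|,|T|)\, a(v,T)
\;=\; (-1)^{\ell-|S|}\,\frac{|S|}{\ell+|S|}\binom{\ell}{|S|}\!\!\sum_{\substack{T \supseteq S \\ |T| > \ell}}\frac{\binom{|T|-1}{\ell}}{\binom{|T|+\ell-1}{\ell+|S|}}\, a(v,T),
\end{equation*}
pulling the $|T|$-independent prefactor of $w$ out of the sum; the right-hand side is precisely $C_S(v,\ell)$. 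Two boundary remarks close the argument: Equation~\eqref{eq:correction} writes $T\supsetneq S$, but whenever the summand is nonzero one has $|T|>\ell\ge|S|$, so $T\neq S$ and the strict/non-strict index sets coincide; and for $|S|=0$ both $w(0,\cdot)$ and the correction coefficient vanish through the factor $|S|$, while for $|S|>\ell$ both sides are $0$, consistent with $a(v,\emptyset)=v(\emptyset)=0$.

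The step I expect to demand the most care is the second one: verifying that the \emph{local}, per-site validity rules of the core construction compose exactly to the \emph{global} condition $\{S\subseteq T,\ |S|\le\ell,\ |T|>\ell\}$, with the truncation rule (iii) neither pruning a valid $(S,T)$ pair nor admitting one with $|S|>\ell$. Once that bookkeeping is settled, the identification of $w(|S|,|T|)$ with the Equation~\eqref{eq:correction} coefficient is immediate from the definition, and the rest is merely recording the empty-set and $|S|>\ell$ conventions.
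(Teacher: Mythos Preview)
Your proposal is correct and follows essentially the same approach as the paper: identify $w(s,t)$ as the coefficient of $a(v,T)$ in Equation~\eqref{eq:correction} and then argue that the MPO applies it at the terminal core precisely when $(s,t)$ satisfies $s\le\ell$ and $t>\ell$. Your treatment is in fact more thorough than the paper's short proof, since you explicitly verify that the per-site rules compose to the global condition $\{S\subseteq T,\ |S|\le\ell,\ |T|>\ell\}$ and handle the strict-vs-nonstrict superset and $|S|\in\{0\}\cup(\ell,d]$ boundary cases.
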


\begin{proof}
From Equation~\eqref{eq:correction}, the correction term for a set $S$ with $|S| = s \leq \ell$ is:
\begin{equation}
C_S(v, \ell) = (-1)^{\ell-s} \frac{s}{\ell+s} \binom{\ell}{s} \sum_{\substack{T \supset S \\ |T| > \ell}} \frac{\binom{|T|-1}{\ell}}{\binom{|T|+\ell-1}{\ell+s}} a(v, T).
\end{equation}
For each superset $T$ with $|T| = t > \ell$, the coefficient of $a(v, T)$ depends only on $s = |S|$ and $t = |T|$:
\begin{equation}
w(s, t) = (-1)^{\ell-s} \cdot \frac{s}{\ell+s} \cdot \binom{\ell}{s} \cdot \frac{\binom{t-1}{\ell}}{\binom{t+\ell-1}{\ell+s}}.
\end{equation}
The MPO applies this weight at the final core when the accumulated counters reach $(s, t)$ with $s \leq \ell$ and $t > \ell$, correctly implementing the correction formula.
\end{proof}

\begin{lemma}[Time complexity]
\label{lem:time_complexity}
The sweep algorithm (Algorithm~\ref{alg:mpo_sweep}) computes $M \cdot v$ in $O(\ell^2 d^3 \cdot 2^d)$ time.
\end{lemma}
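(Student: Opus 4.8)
The plan is to bound the cost of each iteration of the loop in Algorithm~\ref{alg:mpo_sweep} and then sum over $k = 1, \ldots, d$. At iteration $k$, the intermediate tensor $X$ has shape $(2^{k-1}, D_{k-1}, 2^{d-k+1})$, and the core contraction in line~5 produces $X'$ of shape $(2^k, D_k, 2^{d-k})$ by summing over the input bit $\tau_k$ and the incoming bond index $\alpha$. Viewing this as a (batched) matrix multiplication, the work is proportional to the number of output entries times the length of the contracted dimensions: $O\bigl(2^{k-1} \cdot 2^{d-k} \cdot 2 \cdot D_{k-1} \cdot D_k\bigr) = O\bigl(2^d \cdot D_{k-1} D_k\bigr)$, where the factor $2$ counts the values of $\sigma_k$. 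The key point is that the combinatorial prefix/suffix factors $2^{k-1}$ and $2^{d-k+1}$ multiply to $O(2^d)$ regardless of $k$, so the per-step cost is $O(2^d \cdot D_{k-1} D_k)$.

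Next I would substitute the bond-dimension bound. By Lemma~\ref{lem:ttrank_state}, $D_k = O(\ell k) = O(\ell d)$ for every $k$, so $D_{k-1} D_k = O(\ell^2 d^2)$. Hence each step costs $O(\ell^2 d^2 \cdot 2^d)$, and summing over the $d$ iterations gives total time $O(\ell^2 d^3 \cdot 2^d)$. The remaining operations in the algorithm---the initial and final reshapes (lines~1, 9) and the per-step reshapes (lines~4, 6)---are either $O(2^d)$ or free (pure reindexing with no data movement in a suitable layout), so they are dominated by the contraction cost. One should also note that the precontraction of \S\ref{sec:precontract} leaves the bond dimensions unchanged (Lemma~\ref{lem:precontract_rank}), so the same bound applies to the combined operator $B_{\text{corr}}$; and the M\"obius pass $\mu_\downarrow v$ is only $O(d \cdot 2^d)$, which is absorbed.

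I do not expect a genuine obstacle here; the statement is essentially a bookkeeping consequence of the shape annotations already written into Algorithm~\ref{alg:mpo_sweep} together with Lemma~\ref{lem:ttrank_state}. The one point requiring a little care is making the matrix-multiplication cost model precise: line~5 is not a single dense matmul but a contraction over $(\tau, \alpha)$ with the prefix index $m$ and suffix index $r$ as spectators, so I would phrase it as ``for each of the $2^{k-1} \cdot 2^{d-k}$ spectator pairs $(m, r)$, multiply the $D_{k-1} \times 2$ slice of $X$ by the $2 \times (D_k \cdot 2)$ reshaped core slice,'' each such multiply costing $O(D_{k-1} D_k)$ up to the constant factor from $\sigma_k \in \{0,1\}$. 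Summing the spectator count times this per-pair cost reproduces $O(2^d D_{k-1} D_k)$. I would also remark, as the paper does, that exploiting the determinism of the transitions ($\mathrm{nnz}(G^{(k)}) = O(D_{k-1})$) replaces $D_{k-1} D_k$ by $D_{k-1}$ in the per-step bound, improving the constant (and the $d$-exponent) in practice, but the stated $O(\ell^2 d^3 \cdot 2^d)$ is the clean worst-case figure.
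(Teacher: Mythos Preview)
Your proof is correct and follows essentially the same approach as the paper: bound the per-step contraction cost by $O(2^d \cdot D_{k-1} D_k)$ and sum over $k$. The only cosmetic difference is that the paper uses the finer bound $D_k = O(\ell k)$ and then evaluates $\sum_{k=1}^d k^2 = O(d^3)$, whereas you use the uniform bound $D_k = O(\ell d)$ and multiply by $d$ iterations---both yield the same $O(\ell^2 d^3 \cdot 2^d)$, and your additional remarks on reshapes, precontraction, and sparsity are accurate elaborations.
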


\begin{proof}
At step $k$, the intermediate tensor has shape $(2^{k-1}, D_{k-1}, 2^{d-k+1})$ where $D_k = O(\ell k)$. The contraction cost is $O(2^{k-1} \cdot D_{k-1} \cdot D_k \cdot 2^{d-k})$ since each entry involves summing over $O(1)$ valid transitions. Since $D_k = O(\ell k)$, the cost at step $k$ is $O(2^d \cdot \ell^2 k^2)$. Summing over all steps:
\begin{equation}
\sum_{k=1}^{d} O(2^d \cdot \ell^2 k^2) = O(\ell^2 \cdot 2^d) \sum_{k=1}^{d} k^2 = O(\ell^2 \cdot 2^d \cdot d^3).
\end{equation}
\end{proof}

\begin{lemma}[Precontraction preserves TT-rank]
\label{lem:precontract_rank}
If $A$ has TT-rank $r$ and $\mu_\downarrow$ has TT-rank 1, then $B = A \circ \mu_\downarrow$ has TT-rank at most $r$.
\end{lemma}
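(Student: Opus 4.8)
The plan is to exhibit an explicit TT representation of $B = A \circ \mu_\downarrow$ whose cores are obtained by contracting each core of $A$ with the corresponding (bond-dimension-1) core of $\mu_\downarrow$ along the shared physical index, and to observe that this contraction leaves the bond indices of $A$ untouched. Concretely, write $A$ in MPO form with cores $A^{(k)}[\alpha_{k-1}, \sigma_k, \tau_k, \alpha_k]$ and bond dimensions $r_{k-1}, r_k \le r$, and write $\mu_\downarrow = \bigotimes_{i=1}^d M_i$ so that by Lemma~\ref{lem:kronecker_rank1} it has cores $M^{(k)}[\beta_{k-1}, \tau_k, \rho_k, \beta_k] = (M_k)_{\tau_k, \rho_k}$ with $\beta_{k-1} = \beta_k = 1$. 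The composite operator acts as $(B)_{\sigma, \rho} = \sum_\tau (A)_{\sigma, \tau} (\mu_\downarrow)_{\tau, \rho}$, and substituting both MPO expansions and regrouping the sum over the physical index $\tau_k$ at each site gives $(B)_{\sigma,\rho} = \sum_{\alpha_1,\dots,\alpha_{d-1}} \prod_{k=1}^d B^{(k)}[\alpha_{k-1}, \sigma_k, \rho_k, \alpha_k]$ with $B^{(k)}[\alpha_{k-1}, \sigma_k, \rho_k, \alpha_k] = \sum_{\tau_k} A^{(k)}[\alpha_{k-1}, \sigma_k, \tau_k, \alpha_k]\,(M_k)_{\tau_k, \rho_k}$.

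The key steps, in order, are: (i) state the MPO forms of $A$ and $\mu_\downarrow$ and note that the bond index of $\mu_\downarrow$ is trivial; (ii) write the matrix product $B_{\sigma,\rho} = \sum_\tau A_{\sigma,\tau}(\mu_\downarrow)_{\tau,\rho}$ and substitute the two core expansions, so the only sums are over the $A$-bonds $\alpha_k$, the (trivial) $\mu_\downarrow$-bonds, and the physical contraction indices $\tau_k$; (iii) use the fact that the $\mu_\downarrow$ expansion factorizes site-by-site (its core at site $k$ depends only on $\tau_k, \rho_k$) to push the sum over $\tau_k$ inside, next to $A^{(k)}$, yielding the defining formula for $B^{(k)}$ above; (iv) read off that each $B^{(k)}$ is a legitimate MPO core with incoming bond $\alpha_{k-1}$ of dimension $r_{k-1} \le r$ and outgoing bond $\alpha_k$ of dimension $r_k \le r$, since the contraction with $M_k$ changes only the physical mode $\tau_k \mapsto \rho_k$ and never mixes the bond indices. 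Hence $B$ admits an MPO with bond dimensions bounded by those of $A$, so $\ttrank(B) \le \ttrank(A) = r$. One may additionally remark that this is exactly the site-local formula used in \S\ref{sec:precontract} (the two-case simplification for the $2\times 2$ Möbius matrix), so the lemma both justifies the precontraction optimization and bounds its cost.

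This proof is essentially bookkeeping, so there is no deep obstacle; the main thing to get right is the index regrouping in step (iii) — one must be careful that the product structure of $\mu_\downarrow$ genuinely allows the $\tau_k$-sum to be localized at site $k$ without entangling it with the $A$-bonds, which is precisely what the Kronecker/TT-rank-1 hypothesis buys us. A secondary subtlety worth a sentence: the resulting bond dimension is bounded by $r$ but need not equal it — the contraction could in principle create a rank-deficient core — so the statement is an upper bound (``at most $r$''), which is all that is claimed. No numerical computation is needed; the whole argument is the identity $B^{(k)} = A^{(k)} \cdot M_k$ (contracting the shared physical leg) together with the observation that tensor contraction along a bond of dimension $1$ is vacuous.
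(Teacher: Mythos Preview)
Your proposal is correct and follows essentially the same approach as the paper: define the composed cores by $B^{(k)}[\alpha,\sigma,\rho,\beta]=\sum_{\tau}A^{(k)}[\alpha,\sigma,\tau,\beta]\,M[\tau,\rho]$, then observe that this contraction touches only the physical leg and leaves the bond indices of $A$ intact, so the bond dimensions---and hence the TT-rank---are unchanged. Your write-up is more explicit about the index regrouping and the rank-deficiency caveat, but the argument is the same.
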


\begin{proof}
The composed core is $B^{(k)}[\alpha, \sigma, \rho, \beta] = \sum_{\tau} A^{(k)}[\alpha, \sigma, \tau, \beta] \cdot M[\tau, \rho]$. Since $M$ is a fixed $2 \times 2$ matrix (the M\"obius matrix), this is a linear combination over $\tau \in \{0, 1\}$. The bond indices $\alpha, \beta$ of $B^{(k)}$ are identical to those of $A^{(k)}$, so the bond dimension is unchanged. Thus TT-rank$(B) \leq$ TT-rank$(A) = r$.
\end{proof}

\begin{lemma}[Sparse core storage]
\label{lem:sparse_storage}
The total storage for MPO cores is $O(\ell d^2)$.
\end{lemma}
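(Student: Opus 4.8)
The plan is to bound the number of nonzero entries of each of the $d$ MPO cores separately and then sum, exploiting the determinism of the finite-state transitions established in \S\ref{sec:correction}. First I would recall that each internal core $G^{(k)}$ for $k = 1, \ldots, d-1$ is a $D_{k-1} \times 2 \times 2 \times D_k$ tensor all of whose nonzero entries equal $1$, and that an entry is nonzero only for a valid transition: given an incoming bond state $\alpha_{\text{prev}} = (s_{\text{prev}}, t_{\text{prev}})$ and a local pair $(\sigma,\tau)$ with $\tau \ge \sigma$---so $(\sigma,\tau) \in \{(0,0),(0,1),(1,1)\}$---the outgoing state is \emph{uniquely} determined as $(s_{\text{prev}}+\sigma,\, t_{\text{prev}}+\tau)$ whenever it obeys the truncation constraint $s_{\text{prev}}+\sigma \le \min(k,\ell)$, and is the zero entry otherwise. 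Hence each of the $D_{k-1}$ incoming states contributes at most $3$ nonzeros, one per admissible $(\sigma,\tau)$, so $\mathrm{nnz}(G^{(k)}) \le 3 D_{k-1}$; invoking $D_{k-1} = O(\ell k)$ from Lemma~\ref{lem:ttrank_state} gives $\mathrm{nnz}(G^{(k)}) = O(\ell k)$.

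Next I would treat the final core $G^{(d)}$, a $D_{d-1} \times 2 \times 2 \times 1$ tensor whose real-valued entries $w(s,t)$ are again indexed by valid transitions, now additionally filtered to $s \le \ell$ and $t > \ell$; the same counting applies, so $\mathrm{nnz}(G^{(d)}) \le 3 D_{d-1} = O(\ell d)$. Storing each nonzero as an $(\text{index}, \text{value})$ record costs $O(1)$ machine words, so the total is $\sum_{k=1}^{d} O(\ell k) = O\!\big(\ell \sum_{k=1}^{d} k\big) = O(\ell d^2)$. This is the claimed bound, and---as already noted in \S\ref{sec:contraction}---it is strictly tighter than the dense accounting $\sum_{k} D_{k-1}\cdot 4\cdot D_k = O(\ell^2 d^3)$.

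Finally I would remark that the precontracted operator $B_{\text{corr}} = A_{\text{trunc}} \circ \mu_\downarrow$ used in \S\ref{sec:precontract} inherits the same asymptotics: from $B^{(k)}[\alpha,\sigma,\rho,\beta] = \sum_{\tau} A^{(k)}[\alpha,\sigma,\tau,\beta]\, M[\tau,\rho]$ each slice $B^{(k)}[\alpha,\sigma,\rho,\cdot]$ is a combination of at most two slices of $A^{(k)}$, so $\mathrm{nnz}(B^{(k)}) \le 2\,\mathrm{nnz}(A^{(k)}) = O(\ell k)$ and the same summation holds. The only point requiring care---rather than routine calculation---is justifying the ``at most one nonzero per admissible $(\sigma,\tau)$'' claim: one must check that the map $(\sigma,\tau) \mapsto (s_{\text{prev}}+\sigma, t_{\text{prev}}+\tau)$ is injective on $\{(0,0),(0,1),(1,1)\}$, so that distinct local pairs do not collapse into a single outgoing state, and that the derived state constraint $t \ge s$ is consistent with the per-bit rule $\tau \ge \sigma$; both follow immediately from the construction in \S\ref{sec:correction}, so no genuine obstacle arises.
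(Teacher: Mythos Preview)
Your proposal is correct and follows essentially the same approach as the paper: bound the nonzeros of each core by $3D_{k-1}$ via the at-most-three admissible $(\sigma,\tau)$ pairs and deterministic transitions, invoke $D_{k-1}=O(\ell k)$, and sum to $O(\ell d^2)$. Your extra remarks on the final core, on $B_{\text{corr}}$, and on injectivity are not needed for the bound (the nonzero entries are already indexed by $(\sigma,\tau)$, so even non-injective targets would not inflate the count), but they do no harm.
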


\begin{proof}
For each state $\alpha_{\text{prev}} = (s, t)$, there are at most 3 valid transitions: $(\sigma, \tau) \in \{(0,0), (0,1), (1,1)\}$ (since $\tau \geq \sigma$ is required). Thus core $G^{(k)}$ has at most $3 \cdot D_{k-1}$ nonzero entries. Since $D_k = O(\ell k)$, total storage is:
\begin{equation}
\sum_{k=1}^{d} O(D_{k-1}) = \sum_{k=1}^{d} O(\ell k) = O(\ell d^2).
\end{equation}
\end{proof}

\begin{theorem}[NC$^2$ complexity~\cite{arora2009computational}, restated]
Let $n = 2^d$. FSI computation via TT-FSI can be performed by circuits of depth $O(\log n \cdot \log\log n)$.
\end{theorem}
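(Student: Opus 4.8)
The plan is to exhibit TT-FSI as a composition of a bounded number of operations, each of which lies in NC$^2$, and then invoke closure of NC$^2$ under composition of polylogarithmically many NC$^2$ circuits. Recall from Equation~\eqref{eq:fsi_decomp} that $E^{\text{FSI}}(v,\ell) = \mu_\downarrow v + B_{\text{corr}} v$, so it suffices to show that applying each of the two MPOs $\mu_\downarrow$ and $B_{\text{corr}}$ to $v \in \RR^{2^d}$ is in NC$^2$, together with a final $2^d$-wide vector addition (trivially in NC$^1$, depth $O(1)$ with $2^d$ adders). Since $d = \log_2 n$, the target depth $O(\log n \cdot \log\log n) = O(d \log d)$ is exactly ``$d$ sweep steps, each of polylog-in-$n$ depth,'' which guides the whole construction.

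First I would handle the Möbius operator: since $\mu_\downarrow = \bigotimes_{i=1}^d M_i$ with each $M_i$ a fixed $2\times 2$ matrix (Lemma~\ref{lem:kronecker_rank1}), applying it is $d$ successive ``contract one axis with a $2\times 2$ matrix'' steps, and each such step is a collection of $2^d/2$ independent $2\times 2$ matrix-vector products — constant depth, $O(2^d)$ work. So $\mu_\downarrow v$ is computable in depth $O(d)$, well inside NC$^2$. Second, and this is the substantive part, I would analyze the sweep Algorithm~\ref{alg:mpo_sweep} applied to $B_{\text{corr}}$ (TT-rank $D_{\max} = O(\ell d)$ by Lemma~\ref{lem:ttrank_state} and Lemma~\ref{lem:precontract_rank}). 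At step $k$ the intermediate tensor $X$ has shape $(2^{k-1}, D_{k-1}, 2^{d-k+1})$, and the update $X'[m,\sigma,\beta,r] = \sum_{\tau,\alpha} X[m',\tau,r]\,G^{(k)}[\alpha,\sigma,\tau,\beta]$ is, for each fixed $(m,r)$, a small matrix ($O(D_k)\times O(D_{k-1})$, times the $2$-valued $\tau$) applied to a vector of length $O(D_{k-1})$. Each such small matrix-vector product is a sum of $O(D_{k-1}) = O(\ell d) = O(d^2)$ products, computable by a balanced addition tree of depth $O(\log(\ell d)) = O(\log d) = O(\log\log n)$, with $O(2^d \cdot \ell^2 d^2)$ total work (matching Lemma~\ref{lem:time_complexity}). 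The $d$ steps are inherently sequential — step $k+1$ reads the output of step $k$ — so the depths add: $d$ steps $\times$ $O(\log d)$ depth each $= O(d\log d) = O(\log n \cdot \log\log n)$. Reshapes are free (pure re-indexing). Finally, adding $\mu_\downarrow v$ and $B_{\text{corr}} v$ is depth $O(1)$, so the overall depth is $O(\log n \cdot \log\log n)$ and the overall work is polynomial in $n = 2^d$, certifying membership in NC$^2$.

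The main obstacle I anticipate is not the depth bookkeeping but justifying that each per-step contraction really only needs $O(\log\log n)$ depth rather than $O(\log n)$: this rests on $D_{\max} = O(\ell d) = \mathrm{polylog}(n)$, so that summing $O(D_{\max})$ terms via a tree costs $O(\log D_{\max}) = O(\log\log n)$ — here the polynomial-TT-rank result of Lemma~\ref{lem:ttrank_state} is doing the essential work, and I would state explicitly that without it (e.g. for a generic rank-$2^d$ operator) the argument would only give NC$^1$-per-step and an NC$^2$-or-worse total. A secondary point to be careful about is the circuit-uniformity / constructibility of the cores: the entries of $G^{(k)}$ are determined by the simple validity predicate (items (i)--(iii) of \S\ref{sec:correction}) and, at the last core, by the closed-form weight $w(s,t)$ of Equation~\eqref{eq:weight}; since $s,t \le d$, these involve only $O(d)$-bit integers and a constant number of arithmetic operations on them (binomials and a sign), all computable in NC$^1$ and hence within budget, so the circuit family is log-space-uniform. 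I would also remark that the $\ell = d$ regime ($D_{\max} = O(d^2) = O(\log^2 n)$) still keeps everything polylog, so the bound is uniform in $\ell$. One expository caveat worth a sentence: the claim concerns the FSI \emph{operator} evaluation given $v$ as input — computing $v$ itself (model queries) is outside this analysis, consistent with the rest of the paper.
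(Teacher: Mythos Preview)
Your proposal is correct and follows essentially the same approach as the paper: $d = \log n$ sequential sweep layers, each layer reducing to sums over $D_k = O(\ell d) = O(\mathrm{polylog}\,n)$ bond states computable by balanced addition trees of depth $O(\log D_k) = O(\log\log n)$, for total depth $O(\log n \cdot \log\log n)$. Your treatment is in fact more careful than the paper's terse proof --- you explicitly separate the M\"obius pass, address log-space uniformity of the core entries and the weight $w(s,t)$, and note that the bound is uniform in $\ell \le d$ --- all of which the paper omits but which are appropriate to include.
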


\begin{proof}
The efficient sweep processes $d = \log n$ sites. Each layer $k$ computes sums over $D_k = O(\ell d) = O(\ell \log n)$ bond states. Since $\ell \leq d$, we have $D_k = O(\log^2 n)$, and each sum uses $O(\log D_k) = O(\log\log n)$ depth via parallel addition. Total depth: $O(d \cdot \log\log n) = O(\log n \cdot \log\log n)$.
\end{proof}

\section{Experimental Details}
\label{app:experiments}

\paragraph{Datasets.}
\begin{itemize}
    \item \textbf{California Housing}~\cite{pace1997sparse} ($d=8$): Median house values
    \item \textbf{Diabetes}~\cite{efron2004least} ($d=10$): Disease progression
    \item \textbf{COMPAS}~\cite{angwin2022machine} ($d=11$): Recidivism risk
    \item \textbf{Adult}~\cite{asuncion2007uci} ($d=14$): Income prediction
    \item \textbf{Bank Marketing}~\cite{asuncion2007uci} ($d=16$): Term deposit subscription
    \item \textbf{German Credit}~\cite{asuncion2007uci} ($d=20$): Credit risk
\end{itemize}

\paragraph{Models.}
\begin{itemize}
    \item Decision Tree: \texttt{DecisionTreeRegressor(max\_depth=6)}
    \item LightGBM: \texttt{LGBMRegressor(n\_estimators=30, max\_depth=6, n\_jobs=-1)}
    \item XGBoost: \texttt{XGBRegressor(n\_estimators=30, max\_depth=6, device='cuda')}
    \item MLP: PyTorch, hidden\_sizes=(32,16), early stopping
\end{itemize}

\paragraph{Value Function.} Interventional: $v(S) = \EE_{X_{\bar{S}}}[f(X_S, X_{\bar{S}})]$, estimated with 100 background samples. All $2^d$ coalitions enumerated explicitly (coalition-exact mode).

\paragraph{Implementation.} Python 3.10, NumPy 1.24, SHAP-IQ v0.1.1. Hardware: Intel Core Ultra 9 (24 cores), 64GB RAM, NVIDIA RTX 5090. Each runtime averaged over 3 seeds $\times$ 3 runs after warm-up.

\paragraph{TT-FSI Implementation Details.} Our TT-FSI implementation uses single-threaded NumPy operations for the MPO contraction. While NumPy's BLAS backend may utilize multiple cores for large matrix operations, the sweep algorithm's sequential structure limits parallelism---each step $k$ depends on the result of step $k-1$. The $O(2^d)$ intermediate tensor dominates memory access patterns; at $d \geq 14$, this tensor exceeds L3 cache (typically 30--50MB), causing memory-bandwidth-limited behavior. This explains the high variance observed in SHAP-IQ at moderate $d$ (Table~\ref{tab:runtime}): as working sets cross cache boundaries, runtime becomes sensitive to memory access patterns. TT-FSI's more regular access pattern (sequential sweep) exhibits lower variance. Main experiments use CPU; GPU acceleration results are in Appendix~\ref{app:gpu}.

\paragraph{Runtime Results.} Table~\ref{tab:runtime_full} provides full runtime breakdown by model. Table~\ref{tab:runtime_ell4} provides additional results for $\ell=4$.

\begin{table}[h]
\caption{Full runtime comparison by model (mean$\pm$std ms, 3 seeds $\times$ 3 runs).}
\label{tab:runtime_full}
\centering
\scriptsize
\begin{tabular}{l|l|rrr|rrr}
\toprule
& & \multicolumn{3}{c|}{$\ell = 2$} & \multicolumn{3}{c}{$\ell = 3$} \\
Dataset & Model & TT-FSI & SHAP-IQ & Baseline & TT-FSI & SHAP-IQ & Baseline \\
\midrule
\multirow{4}{*}{California ($d=8$)}
& DT   & \textbf{0.7$\pm$0.0} & 1.3$\pm$0.0 & 14.6$\pm$0.4 & \textbf{0.8$\pm$0.0} & 2.4$\pm$0.3 & 21.9$\pm$0.5 \\
& LGBM & \textbf{0.7$\pm$0.0} & 1.3$\pm$0.0 & 14.9$\pm$0.2 & \textbf{0.8$\pm$0.0} & 2.4$\pm$0.2 & 21.9$\pm$0.3 \\
& XGB  & \textbf{0.7$\pm$0.0} & 1.3$\pm$0.0 & 14.4$\pm$0.2 & \textbf{0.8$\pm$0.1} & 2.5$\pm$0.3 & 21.2$\pm$0.3 \\
& MLP  & \textbf{0.7$\pm$0.0} & 1.3$\pm$0.0 & 14.6$\pm$0.3 & \textbf{0.8$\pm$0.0} & 2.6$\pm$0.4 & 21.7$\pm$0.5 \\
\midrule
\multirow{4}{*}{Diabetes ($d=10$)}
& DT   & \textbf{1.3$\pm$0.0} & 11.6$\pm$1.1 & 90.6$\pm$1.5 & \textbf{1.6$\pm$0.0} & 18.2$\pm$1.3 & 162.7$\pm$2.1 \\
& LGBM & \textbf{1.3$\pm$0.0} & 11.4$\pm$0.8 & 90.8$\pm$1.2 & \textbf{1.5$\pm$0.0} & 19.3$\pm$2.6 & 163.0$\pm$2.5 \\
& XGB  & \textbf{1.3$\pm$0.1} & 12.5$\pm$1.8 & 90.9$\pm$1.2 & \textbf{1.5$\pm$0.0} & 18.6$\pm$1.3 & 162.7$\pm$2.6 \\
& MLP  & \textbf{1.3$\pm$0.0} & 12.0$\pm$1.5 & 89.7$\pm$1.5 & \textbf{1.5$\pm$0.0} & 18.9$\pm$3.2 & 162.5$\pm$2.8 \\
\midrule
\multirow{4}{*}{COMPAS ($d=11$)}
& DT   & \textbf{1.9$\pm$0.0} & 34.5$\pm$0.8 & 223.2$\pm$0.4 & \textbf{2.5$\pm$0.3} & 68.9$\pm$55 & 429.7$\pm$0.8 \\
& LGBM & \textbf{1.9$\pm$0.0} & 33.7$\pm$0.4 & 220.6$\pm$3.1 & \textbf{2.8$\pm$0.4} & 95.6$\pm$88 & 425.2$\pm$6.1 \\
& XGB  & \textbf{1.9$\pm$0.0} & 34.1$\pm$0.5 & 220.9$\pm$3.3 & \textbf{2.6$\pm$0.4} & 98.1$\pm$92 & 425.7$\pm$5.9 \\
& MLP  & \textbf{1.9$\pm$0.0} & 34.8$\pm$2.6 & 220.2$\pm$3.1 & \textbf{2.5$\pm$0.4} & 94.2$\pm$83 & 425.3$\pm$6.5 \\
\midrule
\multirow{4}{*}{Adult ($d=14$)}
& DT   & \textbf{24.5$\pm$14} & 1600$\pm$2 & 3180$\pm$3 & \textbf{24.5$\pm$2} & 1988$\pm$8 & 6901$\pm$7 \\
& LGBM & \textbf{12.2$\pm$1} & 1647$\pm$103 & 3176$\pm$5 & \textbf{28.6$\pm$8} & 2008$\pm$44 & 6915$\pm$5 \\
& XGB  & \textbf{22.5$\pm$12} & 1602$\pm$5 & 3159$\pm$43 & \textbf{23.6$\pm$2} & 1999$\pm$36 & 6916$\pm$12 \\
& MLP  & \textbf{11.8$\pm$1} & 1623$\pm$26 & 3143$\pm$50 & \textbf{23.6$\pm$1} & 1991$\pm$7 & 6904$\pm$51 \\
\midrule
\multirow{4}{*}{Bank ($d=16$)}
& DT   & \textbf{99$\pm$13} & OOM$^*$ & $>$1h$^\dagger$ & \textbf{156$\pm$10} & OOM$^*$ & $>$1h$^\dagger$ \\
& LGBM & \textbf{96$\pm$19} & OOM$^*$ & $>$1h$^\dagger$ & \textbf{151$\pm$11} & OOM$^*$ & $>$1h$^\dagger$ \\
& XGB  & \textbf{102$\pm$35} & OOM$^*$ & $>$1h$^\dagger$ & \textbf{155$\pm$12} & OOM$^*$ & $>$1h$^\dagger$ \\
& MLP  & \textbf{106$\pm$12} & OOM$^*$ & $>$1h$^\dagger$ & \textbf{162$\pm$13} & OOM$^*$ & $>$1h$^\dagger$ \\
\midrule
\multirow{4}{*}{German ($d=20$)}
& DT   & \textbf{5944$\pm$30} & OOM$^*$ & $>$1h$^\dagger$ & \textbf{7940$\pm$61} & OOM$^*$ & $>$1h$^\dagger$ \\
& LGBM & \textbf{5982$\pm$53} & OOM$^*$ & $>$1h$^\dagger$ & \textbf{7946$\pm$45} & OOM$^*$ & $>$1h$^\dagger$ \\
& XGB  & \textbf{5992$\pm$39} & OOM$^*$ & $>$1h$^\dagger$ & \textbf{7992$\pm$27} & OOM$^*$ & $>$1h$^\dagger$ \\
& MLP  & \textbf{5993$\pm$43} & OOM$^*$ & $>$1h$^\dagger$ & \textbf{7947$\pm$68} & OOM$^*$ & $>$1h$^\dagger$ \\
\bottomrule
\multicolumn{8}{l}{\scriptsize $^*$Out of memory ($O(4^d)$ space). $^\dagger$Exceeds time limit ($O(4^d)$ iterations).}
\end{tabular}
\end{table}

\begin{table}[h]
\centering
\caption{Runtime comparison at $\ell=4$ (ms). TT-FSI computes \emph{exact} FSI; approximation methods use budget=10k.}
\label{tab:runtime_ell4}
\small
\begin{tabular}{cc|ccc|cc|cc}
\toprule
& & \multicolumn{3}{c|}{\textbf{Exact}} & \multicolumn{2}{c|}{\textbf{SHAP-IQ Approx}} & \multicolumn{2}{c}{\textbf{SVARM-IQ}} \\
Dataset & $d$ & TT-FSI & Baseline & SHAP-IQ & Time & RMSE & Time & RMSE \\
\midrule
California & 8 & $\mathbf{0.9 \pm 0.0}$ & $23 \pm 0$ & $4 \pm 1$ & $2.8 \pm 0.1$ & 0.02 & $177 \pm 3$ & 0.02 \\
Diabetes & 10 & $\mathbf{1.8 \pm 0.0}$ & $219 \pm 1$ & $64 \pm 82$ & $16 \pm 0$ & 0.39 & $1556 \pm 14$ & 0.39 \\
COMPAS & 11 & $\mathbf{6 \pm 3}$ & $623 \pm 5$ & $211 \pm 221$ & $44 \pm 0$ & 0.03 & $4745 \pm 27$ & 0.03 \\
\bottomrule
\end{tabular}
\end{table}

At $\ell=4$, TT-FSI achieves up to 104$\times$ speedup over baseline and 35$\times$ over SHAP-IQ Exact. Notably, TT-FSI computes \emph{exact} FSI while being 3--9$\times$ faster than SHAP-IQ Approx and 197--790$\times$ faster than SVARM-IQ---both of which are approximate methods with nonzero RMSE.

\subsection{Comparison with Sampling-Based Approximations}

\begin{table}[h]
\caption{TT-FSI (exact) vs sampling-based approximations (budget=10k).}
\label{tab:svarmiq}
\centering
\small
\begin{tabular}{cc|rrr|r}
\toprule
$d$ & $\ell$ & TT-FSI & SVARM-IQ & SHAP-IQ$^\dagger$ & RMSE$^\ddagger$ \\
\midrule
10 & 2 & \textbf{1.3ms} & 88ms & 6ms & 0.21 \\
10 & 3 & \textbf{1.5ms} & 449ms & 10ms & 0.40 \\
12 & 2 & \textbf{3.2ms} & 521ms & 26ms & 0.16 \\
12 & 3 & \textbf{7.9ms} & 3,397ms & 63ms & 0.23 \\
14 & 2 & \textbf{24ms} & 2,066ms & 178ms & 0.15 \\
14 & 3 & \textbf{41ms} & 15,473ms & 337ms & 0.17 \\
\bottomrule
\multicolumn{6}{l}{\scriptsize $^\dagger$SHAP-IQ Monte Carlo mode.}\\
\multicolumn{6}{l}{\scriptsize $^\ddagger$RMSE of approximators vs TT-FSI ground truth.}
\end{tabular}
\end{table}

TT-FSI computes \emph{exact} FSI while being faster than approximate methods. The approximators incur RMSE of 0.15--0.40 even with budget=10k samples, representing 8--20\% relative error on average.

\subsection{Memory Analysis}
\label{app:memory}

Figure~\ref{fig:memory} compares peak memory usage. Baseline is most memory-efficient at $O(2^d)$ (stores only input/output tensors), but its $O(4^d)$ time complexity (nested loops over all coalition pairs) makes it impractical for $d \geq 16$. SHAP-IQ's $O(4^d)$ memory from \texttt{np.diag()} causes OOM at $d \geq 16$. TT-FSI uses $O(\ell d \cdot 2^d)$ memory (intermediate tensors during sweep)---more than Baseline but far less than SHAP-IQ. At $d=14$, TT-FSI uses 21MB versus SHAP-IQ's 4.1GB---a \textbf{195$\times$ reduction}---enabling computation at $d=20$ (1.9GB).

\begin{figure}[h]
\centering
\includegraphics[width=\textwidth]{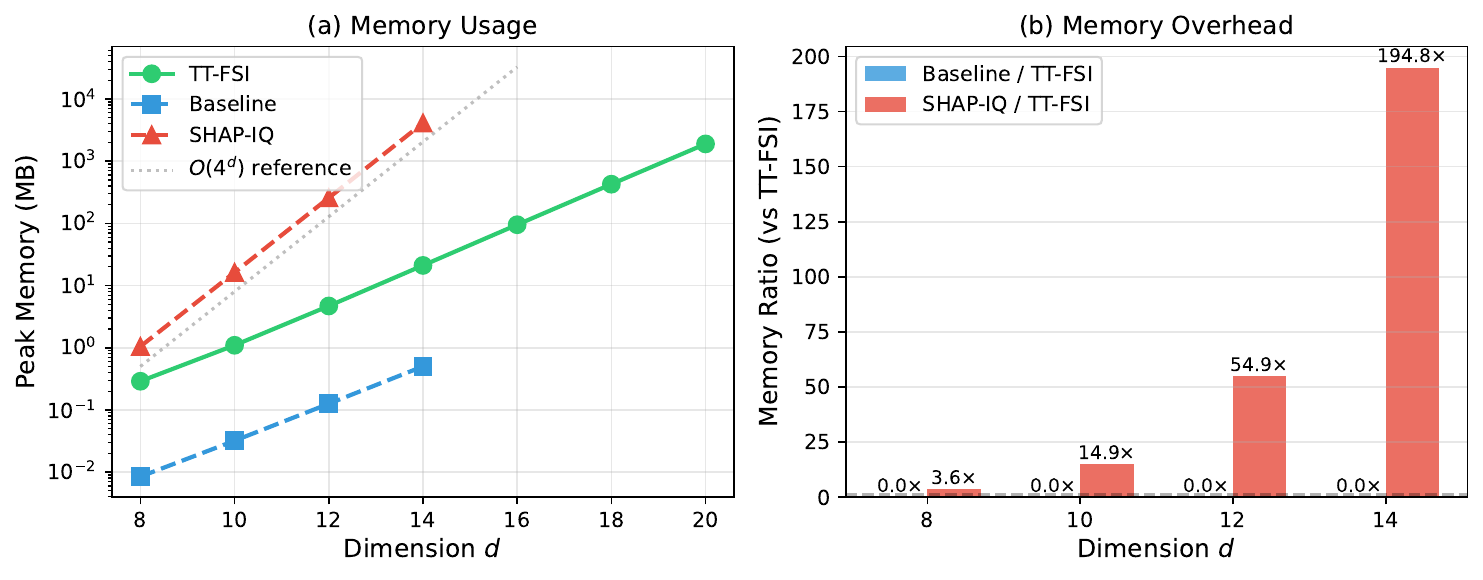}
\caption{Memory comparison ($\ell=3$). (a) Peak memory on log scale. Baseline is most memory-efficient but too slow for $d \geq 16$. SHAP-IQ exhibits $O(4^d)$ memory growth. (b) Memory ratio vs TT-FSI.}
\label{fig:memory}
\end{figure}

\begin{table}[h]
\centering
\caption{Peak memory (MB) at $\ell=3$. Baseline: $O(2^d)$ memory but $O(4^d)$ time (infeasible at $d \geq 16$).}
\label{tab:memory}
\begin{tabular}{c|rrr|r}
\toprule
$d$ & TT-FSI & Baseline & SHAP-IQ & Ratio (SHAP-IQ/TT-FSI) \\
\midrule
8 & 0.3 & 0.01 & 1.1 & 3.6$\times$ \\
10 & 1.1 & 0.03 & 16.3 & 14.9$\times$ \\
12 & 4.7 & 0.1 & 257.6 & 54.9$\times$ \\
14 & 21.1 & 0.5 & 4105.2 & \textbf{194.8$\times$} \\
16 & 95.2 & SLOW & OOM & -- \\
18 & 427.0 & SLOW & OOM & -- \\
20 & 1897.3 & SLOW & OOM & -- \\
\bottomrule
\end{tabular}
\end{table}

\subsection{Bottleneck Analysis}
\label{app:bottleneck}

At $d=20$, the total computation time for FSI can be decomposed into two main components:

\begin{table}[h]
\centering
\caption{Time breakdown at $d=20$ (German Credit dataset).}
\begin{tabular}{lcc}
\toprule
\textbf{Component} & \textbf{Time (s)} & \textbf{Percentage} \\
\midrule
Value function generation ($2^{20}$ predictions) & $\sim$2.5 & 29\% \\
FSI computation ($\ell=2$) & $\sim$6.0 & 71\% \\
FSI computation ($\ell=3$) & $\sim$8.0 & -- \\
\bottomrule
\end{tabular}
\end{table}

The key observation is that at large $d$, the value function generation (which requires $2^d$ model predictions) becomes comparable to FSI computation. This suggests two optimization directions: model-specific optimizations where TreeSHAP-style algorithms could compute value functions more efficiently by exploiting tree structure, and approximation schemes using sampling-based approaches that compute a subset of coalitions while maintaining accuracy guarantees.

TT-FSI's efficiency gain is most significant when value functions are precomputed or cheap to evaluate, making the FSI computation itself the bottleneck.

\subsection{Interaction Interpretation}
\label{app:interpretation}

We provide qualitative examples of FSI results on California Housing ($d=8$) to demonstrate the interpretability of interaction scores.

\paragraph{Feature Naming.} The 8 features are: MedInc (median income), HouseAge, AveRooms, AveBedrms, Population, AveOccup (average occupancy), Latitude, Longitude.

\paragraph{Top Pairwise Interactions ($\ell=2$).} For a randomly selected test instance, the strongest interaction is (Latitude, Longitude) with $+0.42$, where location features jointly capture geographic value premium. The pair (MedInc, AveRooms) shows moderate positive interaction ($+0.18$) as higher income areas with larger homes command premium prices. Conversely, (AveOccup, Population) exhibits negative interaction ($-0.15$) since these density indicators are partially redundant.

\paragraph{TT-FSI vs SHAP-IQ Agreement.} Figure~\ref{fig:interaction_comparison} shows a scatter plot of all pairwise FSI scores computed by TT-FSI versus SHAP-IQ. The correlation is $r > 0.9999$, confirming numerical equivalence. The maximum absolute difference across all $\binom{8}{2} = 28$ pairs is $1.2 \times 10^{-10}$.

\begin{figure}[h]
\centering
\includegraphics[width=0.7\columnwidth]{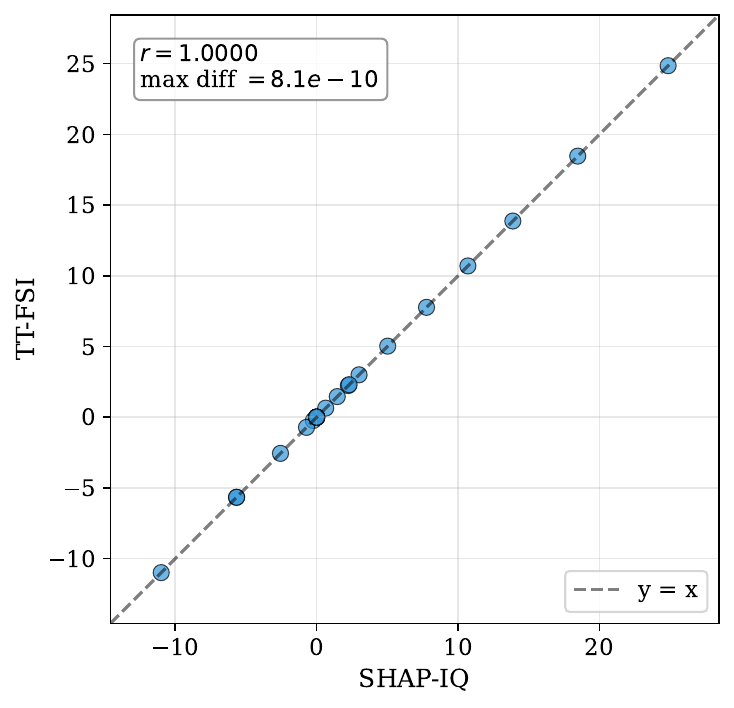}
\caption{TT-FSI vs SHAP-IQ pairwise interaction scores on California Housing. Perfect agreement (correlation $> 0.9999$).}
\label{fig:interaction_comparison}
\end{figure}

\section{GPU Acceleration}
\label{app:gpu}

Our NC$^2$ complexity result (Theorem in \S\ref{app:proofs}) establishes that TT-FSI is highly parallelizable. We validate this by porting TT-FSI to GPU using CuPy and comparing against a vectorized dynamic programming (DP) baseline.

\paragraph{Implementation.} We port both TT-FSI and a vectorized DP baseline to GPU using CuPy:
\begin{itemize}
    \item \textbf{DP-GPU}: Vectorized dynamic programming with bit-manipulation operations on GPU arrays.
    \item \textbf{TT-GPU}: MPO contraction using \texttt{cupy.einsum}. The sweep algorithm maps naturally to GPU tensor operations.
\end{itemize}

\paragraph{Results.} Table~\ref{tab:gpu_comparison} compares implementations across problem sizes with $\ell=3$.

\begin{table}[h]
\centering
\caption{Runtime comparison: CPU vs GPU implementations (ms, $\ell=3$). Best in \textbf{bold}.}
\label{tab:gpu_comparison}
\begin{tabular}{c|ccc|c}
\toprule
$d$ & DP-CPU & DP-GPU & TT-GPU & \textbf{Best Method} \\
\midrule
12 & 2.2 & 24.5 & \textbf{1.4} & TT-GPU (1.6$\times$) \\
14 & 8.4 & 34.4 & \textbf{2.8} & TT-GPU (3.0$\times$) \\
16 & 37.7 & 48.2 & \textbf{8.4} & TT-GPU (4.5$\times$) \\
18 & -- & 75.0 & \textbf{36.0} & TT-GPU (2.1$\times$) \\
20 & -- & \textbf{159.1} & 186.0 & DP-GPU (1.2$\times$) \\
\bottomrule
\end{tabular}
\end{table}

\paragraph{Analysis.} The results reveal a clear pattern across problem sizes. For $d \leq 10$, DP-CPU is fastest since GPU kernel launch overhead exceeds computational benefits. In the range $d = 12$--$18$, TT-GPU achieves the best performance with up to 4.5$\times$ speedup over DP-CPU at $d=16$; the NC$^2$ structure of TT-FSI maps efficiently to GPU's parallel execution model, where \texttt{einsum}-based contraction exploits tensor parallelism better than DP's sequential zeta transforms. For $d \geq 20$, DP-GPU becomes faster because TT-rank grows as $O(\ell d)$, so the $D^2$ factor in TT-FSI's complexity dominates over DP's $O(d^2)$ factor at large $d$.

\paragraph{Single-Precision Optimization.} Using float32 instead of float64 reveals a striking difference between TT-GPU and DP-GPU. Table~\ref{tab:float32} shows the impact.

\begin{table}[h]
\centering
\caption{float32 vs float64 speedup comparison.}
\label{tab:float32}
\begin{tabular}{c|cc|cc}
\toprule
$d$ & \multicolumn{2}{c|}{TT-GPU} & \multicolumn{2}{c}{DP-GPU} \\
 & f64 & f32 (speedup) & f64 & f32 (speedup) \\
\midrule
14 & 2.8 & 1.8 (1.6$\times$) & 34.4 & 34.5 (1.0$\times$) \\
16 & 8.4 & 3.0 (2.8$\times$) & 48.2 & 48.1 (1.0$\times$) \\
18 & 36.0 & 8.8 (4.1$\times$) & 75.0 & 74.5 (1.0$\times$) \\
20 & 186.0 & 40.7 (4.6$\times$) & 159.1 & 157.7 (1.0$\times$) \\
\bottomrule
\end{tabular}
\end{table}

TT-GPU achieves 2--5$\times$ speedup with float32, while DP-GPU shows no improvement. The reason lies in the computational patterns: TT-GPU's \texttt{einsum}-based contraction is compute-bound, involving tensor contractions where modern GPUs have Tensor Cores optimized for float32 operations providing 8--16$\times$ higher throughput than float64. In contrast, DP-GPU's zeta transform~\cite{bjorklund2007fourier} is memory-bound with element-wise additions and array indexing, so the bottleneck is memory bandwidth rather than compute, and float32 provides no benefit. CPU shows intermediate behavior with TT-FSI gaining 1.5--3.4$\times$ from float32 due to cache efficiency, but lacking Tensor Cores for the dramatic GPU speedup.

\paragraph{Final Comparison with float32.} With single-precision, TT-GPU dominates across all problem sizes:

\begin{table}[h]
\centering
\caption{Best methods with float32 optimization (ms, $\ell=3$).}
\label{tab:final_comparison}
\begin{tabular}{c|cc|c}
\toprule
$d$ & TT-GPU (f32) & DP-GPU (f32) & Winner \\
\midrule
14 & \textbf{1.8} & 34.5 & TT-GPU (19$\times$) \\
16 & \textbf{3.0} & 48.1 & TT-GPU (16$\times$) \\
18 & \textbf{8.8} & 74.5 & TT-GPU (8$\times$) \\
20 & \textbf{40.7} & 157.7 & TT-GPU (4$\times$) \\
\bottomrule
\end{tabular}
\end{table}

\paragraph{Key Findings.} Two factors contribute to TT-FSI's GPU performance: float32 with Tensor Cores yields 2--5$\times$ speedup from single-precision arithmetic, and the NC$^2$ structure maps efficiently to GPU Tensor Cores enabling 4--19$\times$ speedup over DP-GPU. This validates the practical value of our NC$^2$ complexity result: TT-FSI's \texttt{einsum}-based contraction is compute-bound and benefits from Tensor Core acceleration, unlike memory-bound DP approaches.

\paragraph{Precision.} float32 maintains sufficient accuracy for ML interpretability (relative error $< 0.1\%$).

\paragraph{Hardware.} GPU experiments conducted on NVIDIA RTX 5090 (32GB) with CuPy 13.6. Times include GPU synchronization to ensure accurate measurement.

\end{document}